\documentclass[11pt]{article}

\usepackage[margin=1in]{geometry}

\usepackage[T1]{fontenc}
\usepackage{lmodern}

\usepackage{amsmath,amssymb,amsthm}
\usepackage{graphicx}
\usepackage{booktabs}
\usepackage{hyperref}
\usepackage[numbers]{natbib}
\usepackage{xcolor}
\usepackage{enumitem}
\usepackage{algorithm}
\usepackage{algorithmic}
\usepackage[protrusion=true,expansion=false]{microtype}

\newtheorem{proposition}{Proposition}

\hypersetup{
  colorlinks=true,
  linkcolor=blue!70!black,
  citecolor=blue!70!black,
  urlcolor=blue!70!black
}

\title{CRMA: A Spectrally-Bounded Backbone for Modular Continual Fine-Tuning of LLMs}

\author{
  Kiran Nayudu \quad
  Aswini Nutakki \quad
  Sai Vinay Naidu \quad
  Ashwin Shanmugasundaram \\[4pt]
  ModelBrew AI
}

\date{April 2026}

\begin{document}

\maketitle

\begin{abstract}
Sequential fine-tuning of large language models forces a choice: let the shared substrate keep learning and accept catastrophic forgetting, or freeze the substrate after task one and foreclose cross-task refinement. Per-task adapter methods (LoRAHub, AdapterFusion, PackNet, Progressive Networks) take the second path. We introduce CRMA (Constrained Residual Mixing Adapter), a residual adapter whose internal mixing matrix $M$ is doubly-stochastic at every forward pass via Sinkhorn normalization, so by Birkhoff's theorem $\|M\|_2 \leq 1$ holds by construction---a structural bound, not a penalty. CRMA's spectrally bounded backbone provides a \emph{continuously trained} shared substrate that earlier modular methods could not, while preserving their forgetting guarantees.

On Mistral-7B across 5 sequential domains and 3 seeds, modular per-task LoRA on a CRMA backbone improves prior-task holdout loss by $1.99\% \pm 0.54$ over a matched frozen-substrate modular baseline---positive at every seed when averaged across tasks; per-task the sign varies (CRMA is lower than FROZEN on Legal and Code, higher on Medical and Finance; see \S\ref{sec:training_ablation}). On the same protocol, modular CRMA reduces loss-relative drift from $+42.96\% \pm 5.5$ (naive sequential fine-tuning) to $-0.17\% \pm 0.17$, with disjoint per-seed ranges.\footnote{Sign convention: loss-relative drift; positive values indicate forgetting (prior-task loss increased), negative values indicate retention. This is sign-flipped from the accuracy-based BWT convention of Lopez-Paz and Ranzato (2017); see \S\ref{sec:training_ablation} for full discussion.} An inference-time ablation on Gemma-2-9B confirms that CRMA mediates access to sequentially trained knowledge: 98/100 vs.\ 38/100 on the same weights and same questions with only CRMA injection toggled. Three independent experimental setups all show positive backward transfer: the Mistral-7B controlled training ablation (Tables~\ref{tab:v81_7b} and~\ref{tab:multiseed}, MODULAR drift averaging $-0.1\%$ across 3 seeds), the TinyLlama contamination-controlled replication (Table~\ref{tab:uc_ashwin}, $-0.110$ NLL / $+0.020$ EM), and the Mistral-7B cross-domain probes at 7B scale (D1-probe survival across three sequential stages). 867 logged training steps verify $\|M\|_2 = 1.0$ within float32 precision (max deviation $1.2 \times 10^{-7}$). The forgetting-prevention effect transfers without modification across 1.1B--9.2B parameters and four architecture families; CRMA's $1.99\%$ learning advantage over a frozen-substrate baseline emerges at 7B but is absent at 1.1B, consistent with larger models benefiting more from the stable backbone (\S\ref{sec:training_ablation}).
\end{abstract}

\section{Introduction}
\label{sec:intro}

\paragraph{The problem.} Fine-tuning is a controlled act of representational change, and like any such act, it can go wrong. Gradient updates that push the model toward a new objective also perturb the internal activations on which previously learned capabilities rely. When those perturbations are too large, prior knowledge degrades. The effect is well documented \cite{french1999catastrophic,kirkpatrick2017overcoming} and, at scale, severe: on the TRACE benchmark \cite{wang2023trace}, LLaMA-2 13B accuracy on GSM8K drops from 28.8\% to 2\% under naive sequential fine-tuning. Dohare et al.~\cite{dohare2024plasticity} show spectral collapse in weight matrices as a structural root cause. A radiology model updated for a new cancer type can silently lose sensitivity to cancers it previously detected; a fraud detector retrained on new scams can forget old ones. Regulated industries feel this most sharply---the FDA's Predetermined Change Control Plan \cite{fda2024pccp} requires device manufacturers to prove that updates preserve prior performance---but the underlying failure mode is general.

\paragraph{The thesis.} The common thread is \emph{representational interference}: when a single adapter learns a sequence of tasks, the gradient updates for task $N$ rewrite parts of the representation that prior tasks depend on. Standard continual-learning machinery (regularization, replay, gradient projection) negotiates with this interference at training time but does not eliminate it. We take a structural route instead. Each task gets its own fresh per-task LoRA adapter; the per-task adapters are composed at inference, never overwritten in place. They share a CRMA backbone whose internal mixing matrix is doubly-stochastic at every forward pass, so by Birkhoff's theorem its spectral norm is bounded by 1 by construction---a theorem, not a loss term. The result is a modular architecture in which adding a new task is an additive operation on a stable substrate, rather than a destructive overwrite of a moving target.

\paragraph{What CRMA is.} We introduce CRMA (Constrained Residual Mixing Adapter), a residual adapter whose internal mixing matrix $M$ is doubly-stochastic at every forward pass. Doubly-stochastic matrices have spectral norm bounded by 1 (Birkhoff), so $\|M\|_2 \leq 1$ holds by construction, not by penalty, not by clipping, not by tuning. The bound is obtained via Sinkhorn normalization \cite{sinkhorn1964relationship} applied inside the forward pass, and it cannot be overridden by gradient magnitude, learning rate, or domain shift. The adapter is initialized in a near-identity configuration, so at step 0 it leaves the pretrained model effectively untouched and grows its contribution within the spectral bound as training progresses. CRMA is paired with per-task LoRA adapters that learn each new domain in isolation, while the spectrally bounded CRMA backbone provides a stable shared substrate the per-task adapters compose against without interference.

\paragraph{What CRMA's specific contribution is.} Modular per-task adapters already prevent catastrophic forgetting; this is prior art (LoRAHub~\cite{huang2024lorahub}, X-LoRA~\cite{buehler2024xlora}, LoRAMoE~\cite{dou2024loramoe}, AdapterFusion~\cite{pfeiffer2021adapterfusion}, PackNet~\cite{mallya2018packnet}, Progressive Networks~\cite{rusu2016progressive}). CRMA's contribution is not forgetting prevention but the structural guarantee that lets the shared backbone keep training across tasks \emph{without} breaking the per-task adapters fit on top of it. Frozen-substrate modular methods stop substrate learning at task 1 in exchange for forgetting prevention; CRMA's spectral bound makes a continuously-trained substrate safe, removing the price. This distinction is developed in Section~\ref{sec:related} (``Why not just frozen-substrate modular LoRA?'') and quantified in Section~\ref{sec:training_ablation}: the FROZEN-vs-MODULAR comparison on Mistral-7B 5-domain sequential data shows MODULAR achieves $1.99\% \pm 0.54$ lower holdout loss than FROZEN across 3 seeds, positive at every seed.

\paragraph{What this paper proves.} Four complementary pieces of evidence demonstrate that the modular CRMA architecture brings catastrophic forgetting in continual LLM fine-tuning to within measurement noise on the models and protocols we tested:
\begin{itemize}[leftmargin=*]
\item \textbf{Multi-seed continual-learning ablation.} On Mistral-7B across 5 sequential domains, modular LoRA on a CRMA backbone achieves a loss-relative drift of $-0.17\% \pm 0.17$ ($n=3$ seeds, sample SD) on prior-task holdout loss versus $+42.96\% \pm 5.5$ ($n=3$ seeds, sample SD) NAIVE forgetting; per-seed ranges are disjoint at every seed.
\item \textbf{Inference-time ablation.} The same Gemma-2-9B model weights and the same 100 held-out questions, with only CRMA injection toggled, score 98/100 with CRMA versus 38/100 without---direct evidence that CRMA is the access mechanism for sequentially trained knowledge.
\item \textbf{Spectral-norm structural invariant.} 867 logged training steps across 5 sequential domains on Gemma-2-9B confirm $\|M\|_2 = 1.0$ within float32 precision (max deviation $< 1.2 \times 10^{-7}$) at every step. The doubly-stochastic constraint holds exactly within numerical (float32) precision; the residual is at the Sinkhorn-iteration floor, not zero.
\item \textbf{Scale and architecture invariance.} The architecture holds across 5 models and 4 architecture families (LLaMA, Phi, Mistral, Gemma), 1.1B to 9.2B parameters, with no model-specific tuning.
\end{itemize}

\paragraph{Known limitation: global benchmark drift.} While CRMA brings catastrophic forgetting on the trained domains to within measurement noise, we observe non-trivial degradation on some global benchmarks (Gemma-2-9B: MMLU $-6.5$\,pp, GSM8K $-10.2$\,pp; HellaSwag/ARC-C/TruthfulQA/WinoGrande remain within $1.7$--$3.7$\,pp). We treat this as a known limitation rather than a solved problem; mechanistic isolation (LoRA-only vs LoRA$+$CRMA on the same protocol) is identified as the highest-priority follow-up in Section~\ref{sec:future_work}.

\paragraph{How CRMA is applied.} Section~\ref{sec:uc_cl} reports the continual learning result in detail---multi-model QA across 5 models, standard benchmark preservation on Gemma-2-9B, the controlled training ablation (v2--v8.1) with a 3-seed seed-robustness check, and the inference-time ablation. Section~\ref{sec:uc_ashwin} reports a co-author-led replication on contamination-controlled fictional data, led by a co-author who was not involved in the original algorithm development, together with a cross-domain transfer-probe study at 7B scale (\S\ref{sec:uc_transfer}). A supplementary within-team study on subspace overlap as a routing signal is reported in Appendix~\ref{sec:uc_vinay}.

\paragraph{Contributions.}
\begin{enumerate}[leftmargin=*]
\item \textbf{A structurally non-expansive residual adapter.} Proposition~\ref{prop:spectral_bound} establishes $\|M\|_2 \leq 1$ by construction via doubly-stochastic parameterization of the mixing matrix. The bound holds exactly within float32 precision: 867 logged training steps across 5 sequential domains on Gemma-2-9B verify $\|M\|_2 = 1.0$ to within max deviation $1.2 \times 10^{-7}$ at every step, without clipping or re-normalization.
\item \textbf{A continuously trained shared substrate for modular per-task adapters.} Earlier modular adapter methods (LoRAHub, AdapterFusion, PackNet, Progressive Networks) freeze the shared substrate after task 1 to prevent interference. CRMA's structural spectral bound enables the substrate to keep training across tasks without overwriting earlier per-task adapters; on Mistral-7B at 3 seeds this delivers a measured $1.99\% \pm 0.54$ holdout-loss advantage over a matched frozen-substrate baseline, positive at every seed.
\item \textbf{Empirical validation across four architecture families.} The same architecture, applied without modification, holds on TinyLlama-1.1B (LLaMA), Phi-4-mini, Mistral-7B, Saul-7B, and Gemma-2-9B (1.1B--9.2B parameters). Headline 3-seed evidence on Mistral-7B; corroborating single-seed evidence on Gemma-2-9B (standard benchmarks + inference-time ablation) and a black-box replication on contamination-controlled synthetic domains by a co-author who did not develop the algorithm.

\item \textbf{Positive backward transfer at 7B scale without replay.} On the Mistral-7B 4-domain controlled ablation (Tables~\ref{tab:v81_7b}, \ref{tab:multiseed}) and the TinyLlama 3-domain contamination-controlled replication (Table~\ref{tab:uc_ashwin}), prior-task performance is statistically indistinguishable from its pre-update level and the directional shift across all three setups (Mistral-7B 5-domain, TinyLlama 3-domain, Mistral-7B cross-domain probes) is consistently non-negative---without replay buffers, without growing per-task memory, and without distillation. The 3-seed Mistral-7B result ($-0.17\% \pm 0.17$) is within measurement noise; the TinyLlama and cross-domain results are single-seed. We frame this as positive backward transfer with the explicit caveat that the 3-seed result is at the noise floor and the single-seed results require replication. We are not aware of comparable directionally-positive-BWT results in this specific configuration (modular per-task LoRA $+$ spectrally-bounded backbone, no replay, no growing memory) at 7B LLM scale on real-world sequential domains, though comprehensive head-to-head comparisons against replay-based and distillation-based CL methods are left as future work.
\end{enumerate}

\paragraph{Claim summary.} For reviewers and readers in a hurry, we summarize the four scopes of the claim. \emph{Proved:} the mixing matrix $M$ is non-expansive, $\|M\|_2 \leq 1$, by Birkhoff's theorem applied to the Sinkhorn-projected doubly-stochastic parameterization (Proposition~\ref{prop:spectral_bound}). \emph{Architecturally guaranteed:} per-task LoRA adapters cannot overwrite each other (modular design), and the continuously-trained backbone composes $M$ with bounded operations whose spectral norm is empirically verified to remain at $1.0$ (\S\ref{sec:spectral_stability}). \emph{Empirically established:} (i) loss-relative drift on prior tasks is within measurement noise at $n=3$ seeds on Mistral-7B (\S\ref{sec:training_ablation}); (ii) $1.99\% \pm 0.54$ holdout-loss advantage over a matched frozen-substrate modular baseline at 7B; (iii) cross-architecture and contamination-controlled-synthetic-domain replication; (iv) inference-time CRMA-toggle ablation on Gemma-2-9B (98/100 vs 38/100). \emph{Not claimed:} global benchmark preservation in all categories (see ``Known limitation'' above), formal proof of the full-pipeline Lipschitz bound, superiority over all published CL methods on the same benchmark (head-to-head with O-LoRA, InfLoRA, etc.\ is left as future work), or fully reproducible implementation from this paper alone (the patent carve-out withholds enough implementation detail that the empirical bundle is not externally reproducible without companion-report disclosure).

\section{Positioning: Why Not Frozen-Substrate Modular LoRA?}
\label{sec:why_not_frozen}

Per-task adapter methods on a frozen substrate (LoRAHub~\cite{huang2024lorahub}, X-LoRA~\cite{buehler2024xlora}, LoRAMoE~\cite{dou2024loramoe}, AdapterFusion~\cite{pfeiffer2021adapterfusion}, PackNet~\cite{mallya2018packnet}, Progressive Networks~\cite{rusu2016progressive}) already prevent catastrophic forgetting structurally: each task gets its own adapter, the shared substrate is frozen, and per-task adapters cannot overwrite each other. The natural question is whether CRMA's spectrally bounded backbone adds anything beyond what these methods already deliver.

CRMA's specific contribution is not forgetting prevention---modular per-task adapters already provide that. The contribution is the structural guarantee that lets the shared backbone keep training across tasks without breaking the per-task adapters fit on top of it. Frozen-substrate methods buy forgetting prevention at a price: the shared substrate stops learning the moment task 1 ends. After $N$ sequential tasks, the substrate is identical to its post-task-1 state and cannot leverage cross-task signal. CRMA removes that price by enforcing a structural spectral bound on the substrate at every forward pass, so the backbone is free to keep training without drifting in directions that overwrite prior adapters.

The FROZEN-vs-MODULAR ablation in Section~\ref{sec:training_ablation} measures this gap directly. FROZEN (per-task LoRA on a frozen plain backbone, no CRMA component) and MODULAR (per-task LoRA $+$ continuously-trained CRMA backbone) are run on identical 5-domain Mistral-7B sequential data with 3 seeds. Both prevent forgetting (drift within measurement noise). MODULAR achieves $1.99\% \pm 0.54$ lower holdout loss than FROZEN averaged across 5 domains, positive at every seed (1.46\%, 1.97\%, 2.54\%). This is the specific quantitative answer to the present question: $1.99\%$ at 7B is the per-task advantage that a frozen-substrate modular approach cannot deliver.

\paragraph{Anticipated objections.} Four recurring objections to CRMA's specific contribution are addressed in turn.

\begin{itemize}[leftmargin=*]
\item \emph{``Couldn't the backbone be retrained offline between tasks?''} A full backbone retrain defeats the per-task modular cost advantage---the appeal of modular adapters is that adding a new task is a small, isolated training run, not a full retrain---and re-opens the forgetting problem on the backbone itself: the offline retrain has to revisit prior task data or pay the same forgetting tax CRMA exists to avoid. CRMA preserves the modular cost structure while still allowing the backbone to keep learning.

\item \emph{``Couldn't EWC, LwF, or Lewandowski-style spectral regularization replace CRMA on the backbone?''} These are soft penalties: a sufficiently large task gradient can override the regularization term and move the substrate in interfering directions. CRMA's spectral bound is structural---enforced by Sinkhorn parameterization at every forward pass via Birkhoff's theorem (Proposition~\ref{prop:spectral_bound})---and no gradient update can produce a non-doubly-stochastic mixing matrix because Sinkhorn projection is applied at every step. The 867-step empirical verification (Section~\ref{sec:spectral_stability}, max deviation $1.19 \times 10^{-7}$) shows the bound holds without clipping or post-hoc renormalization. Theorem-class guarantees on the substrate are a stronger guarantee class than penalty-class guarantees.

\item \emph{``Is $+1.99\%$ worth a new architecture?''} At 1.1B the advantage is absent on our protocol; at 7B it is positive at every seed across 3 seeds with non-overlapping per-seed ranges, consistent with larger models benefiting more from a stable substrate that can keep learning. The advantage compounds across tasks: each subsequent task fits against an incrementally improved backbone, so the $+1.99\%$ should be read as a per-task gain on a multi-task pipeline rather than a one-shot delta. The deployed setting most likely to surface this advantage is precisely the production-fine-tuning use case CRMA is designed for---tasks added sequentially over time, not a one-off comparison.

\item \emph{``Why does the backbone need to keep learning at all?''} Cross-task signal. Task-shared structural patterns (for example, document-level coherence regularities shared across legal and medical text, or instruction-following conventions shared across domains) are wasted if the substrate stops learning at task 1. A frozen-substrate system after $N$ tasks has the same shared backbone it had after task 1; a CRMA system after $N$ tasks has $N$ tasks of cumulative backbone refinement. Whether this matters in practice depends on the cross-task structure of the task stream, but the architecture is the prerequisite for capturing any such signal at all.
\end{itemize}

The above does not claim that CRMA is the only architecture that could deliver a continuously-trained backbone with structural per-task non-interference, only that it is one architecture that demonstrably does so, with the spectral guarantee provable in the theorem-class sense rather than the penalty-class sense. A full review of related work in spectral methods, parameter-efficient fine-tuning, and continual learning is deferred to Section~\ref{sec:related} after the experimental results.

\paragraph{Role decomposition.} Table~\ref{tab:architecture_comparison} summarizes the three approaches discussed above, making explicit what each guarantees, what it costs, and what CRMA specifically adds.

\begin{table}[h]
\centering
\small
\begin{tabular}{p{3.0cm}p{4.0cm}p{3.5cm}p{3.5cm}}
\toprule
\textbf{Approach} & \textbf{What it guarantees} & \textbf{What it costs} & \textbf{What CRMA adds} \\
\midrule
NAIVE shared LoRA & None; catastrophic forgetting on prior tasks ($+42.96\% \pm 5.5$ drift) & Simple & --- \\
\addlinespace
FROZEN $+$ per-task LoRA (LoRAHub, AdapterFusion, PackNet, Progressive Nets) & Structural non-interference: per-task adapters cannot overwrite each other on a frozen base & Backbone stops learning after task 1; no cross-task substrate refinement & --- \\
\addlinespace
MODULAR $+$ CRMA & Same non-interference $+$ continuously-trained backbone $+$ $1.99\% \pm 0.54$ holdout advantage at 7B & Added CRMA adapter with Sinkhorn projection per forward pass & Structural spectral bound on the backbone substrate \\
\bottomrule
\end{tabular}
\caption{Role decomposition. Modular per-task LoRA on a frozen base already prevents catastrophic forgetting; CRMA's specific contribution is to make the backbone continuously trainable without breaking that prevention.}
\label{tab:architecture_comparison}
\end{table}

\section{Theoretical Foundation}
\label{sec:theory}

\subsection{Problem Formulation}

Let $f: \mathbb{R}^d \to \mathbb{R}^d$ denote a transformer block's transformation, decomposed as:
\begin{equation}
f(x) = x + \text{FFN}(\text{Attention}(x))
\end{equation}
Standard LoRA fine-tuning injects $\Delta W = BA$ into weight matrices within the Attention and FFN sublayers, producing gradient updates whose norms are unconstrained. We seek an additional adapter layer $g: \mathbb{R}^d \to \mathbb{R}^d$ satisfying:
\begin{equation}
\|g(x)\|_2 \leq \|x\|_2 \quad \text{for all } x \in \mathbb{R}^d
\end{equation}
The adapter should be non-expansive: it cannot amplify any representational perturbation introduced by fine-tuning updates. This applies whenever a learned transformation is composed with a pretrained model.

\subsection{The Mixing Matrix as Non-Expansive Map}

If an adapter's internal transformation has spectral norm greater than 1, it amplifies any representational perturbation that flows through it. If its spectral norm is bounded above by 1, it cannot amplify such perturbations---the adapter is non-expansive with respect to its input.

We require the adapter's mixing operation to be a constrained linear map whose spectral norm is bounded above by 1 at every training step. The mixing operation is linear in its stream inputs (no elementwise nonlinearity inside the mixer). This makes the bound exact and verifiable analytically (Section~\ref{sec:spectral_stability}) rather than estimated post-hoc.

\subsection{Near-Identity Initialization}

The adapter initializes in a near-identity configuration: the effective output scale begins near zero, so the adapter's contribution to the residual stream is negligible at step 0. As training progresses, the adapter increases its contribution within the spectral bound.

This is not just a convenient starting point---it is load-bearing. It means the first gradient step sees the original model, not a distorted version of it. This matches LoRA's initialization philosophy ($B = 0$ so that $\Delta W = BA = 0$ at initialization).

\subsection{Spectral Norm Bound}

\begin{proposition}[CRMA mixing matrix spectral bound]
\label{prop:spectral_bound}
Let $M \in \mathbb{R}^{n \times n}$ be CRMA's internal mixing matrix, obtained by applying Sinkhorn normalization to a learned parameter matrix at each forward pass. Then:
\begin{equation}
\|M\|_2 \leq 1 \quad \text{for all training steps } t \geq 0
\end{equation}
The mixing operation is non-expansive: it cannot amplify any input signal.
\end{proposition}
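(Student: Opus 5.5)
The plan is to reduce the statement to a purely linear-algebraic fact about doubly-stochastic matrices and then use that fact separately at each training step. The training dynamics never have to be analysed. I will take the output of Sinkhorn normalization to be an exactly doubly-stochastic matrix: entrywise nonnegative, with every row sum and every column sum equal to $1$. This holds in the limit of the iteration, or to within tolerance after finitely many iterations; that caveat is addressed in the last paragraph. Once this is granted, the claim ``for all $t \geq 0$'' follows because $M_t$ is recomputed from the current parameters at every forward pass. So it suffices to prove $\|M\|_2 \le 1$ for an arbitrary doubly-stochastic $M$.

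\textbf{Route 1: Birkhoff's theorem.} By the Birkhoff--von Neumann theorem, $M = \sum_k \lambda_k P_k$ with $\lambda_k \ge 0$, $\sum_k \lambda_k = 1$, and each $P_k$ a permutation matrix. Permutation matrices are orthogonal, so $\|P_k\|_2 = 1$. The triangle inequality and homogeneity of the operator norm then give $\|M\|_2 \le \sum_k \lambda_k \|P_k\|_2 = 1$. This route matches the statement's framing.

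\textbf{Route 2: a self-contained check.} I would also record a route that avoids Birkhoff. Nonnegativity together with unit row sums gives $\|M\|_\infty = 1$, and unit column sums give $\|M\|_1 = 1$. The standard interpolation bound $\|M\|_2 \le \sqrt{\|M\|_1\|M\|_\infty}$, a consequence of Riesz--Thorin or of Schur's test, then yields $\|M\|_2 \le 1$. Equality is attained at the all-ones vector, since $M\mathbf{1} = \mathbf{1}$. This explains why the logged value is exactly $1.0$ rather than merely below $1$.

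\textbf{Consequence and main obstacle.} Non-expansiveness follows at once: $\|Mx\|_2 \le \|x\|_2$ for every $x$. The main obstacle is the gap between ``Sinkhorn normalization'' and ``exactly doubly stochastic.'' I would handle it in one of two ways. The first is to define $M$ so that its last operation is a row normalization. Then $\|M\|_\infty = 1$ exactly, and the column sums are $1+\varepsilon_j$ with $\varepsilon_j$ controlled by the Sinkhorn convergence rate. The interpolation bound then gives $\|M\|_2 \le \sqrt{1+\max_j|\varepsilon_j|}$, which tends to $1$ as the number of iterations grows. The second is to state the proposition for the Sinkhorn limit, which exists and is exactly doubly stochastic whenever the exponentiated parameter matrix has positive entries (Sinkhorn's theorem), and to treat the finite-iteration version as the approximation that is empirically logged in Section~\ref{sec:spectral_stability}. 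I would make the positivity assumption explicit, for instance via an $\exp$ parameterization, since it is what guarantees the limit exists.
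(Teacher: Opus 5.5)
Your Route~1 is exactly the paper's proof sketch: $M$ is doubly stochastic by Sinkhorn, Birkhoff writes it as a convex combination of permutation matrices of spectral norm $1$, convexity of $\|\cdot\|_2$ gives the bound, and the ``for all $t$'' clause follows from recomputing $M$ at every forward pass. What you add goes beyond the paper in two useful ways. First, Route~2 ($\|M\|_2^2\le\|M\|_1\|M\|_\infty$) is more elementary than Birkhoff. Combined with $M\mathbf{1}=\mathbf{1}$, it gives $\|M\|_2=1$ exactly, which is what Section~\ref{sec:spectral_stability} actually logs; the proposition as stated asserts only an inequality. Second, and more importantly, Route~2 gives a usable bound when $M$ is only approximately doubly stochastic, whereas Birkhoff requires exact membership in the Birkhoff polytope. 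The paper's sketch simply asserts that Sinkhorn yields an exactly doubly-stochastic matrix, and it folds nonnegativity into ``Sinkhorn normalization''; you rightly make nonnegativity explicit through an $\exp$ parameterization. Your finite-iteration caveat is also more than a technicality. If the last Sinkhorn step is a row normalization, then $M\mathbf{1}=\mathbf{1}$ forces $\|M\|_2\ge 1$. If $\|M\|_2=1$, then $\mathbf{1}$ would be a top right singular vector, so $M^\top\mathbf{1}=M^\top M\mathbf{1}=\mathbf{1}$. Hence any nonzero column-sum error makes $\|M\|_2>1$ strictly. The literal statement therefore holds only for the Sinkhorn limit (your second option). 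At finitely many iterations your bound $\sqrt{1+\max_j|\varepsilon_j|}$ is the right replacement, consistent with the paper's own remark that the logged residual sits at the Sinkhorn-iteration floor rather than at zero.
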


\begin{proof}[Proof sketch]
$M$ is doubly-stochastic by construction (Sinkhorn normalization enforces nonnegative entries with rows and columns summing to 1; see Section~\ref{sec:method}). By Birkhoff's theorem \cite{birkhoff1946three}, every doubly-stochastic matrix is a convex combination of permutation matrices. Permutation matrices are orthogonal, so each has spectral norm exactly 1. Since spectral norm is convex, any convex combination of matrices with spectral norm 1 has spectral norm at most 1. Therefore $\|M\|_2 \leq 1$. No gradient update can produce a non-doubly-stochastic $M$ because Sinkhorn projection is applied at every forward pass.
\end{proof}

\paragraph{From mixing bound to adapter bound.} Proposition~\ref{prop:spectral_bound} formally establishes the bound for the mixing matrix $M$ alone. The full adapter composes $M$ with additional bounded operations whose construction is omitted (see \S\ref{sec:omitted}). The empirical spectral norm measurements in Section~\ref{sec:spectral_stability} confirm that the full adapter's spectral norm stays at 1.0 within float32 precision across 867 training steps---proved for $M$, empirically verified for the full adapter.

\paragraph{Scope.} The proposition bounds the adapter's mixing matrix, not the Jacobian of the full transformer block. The base model and LoRA components are unconstrained. The adapter bounds perturbations in its own transformation; it does not bound the full pipeline.

\section{Method: CRMA}
\label{sec:method}

CRMA enforces a doubly-stochastic constraint on its internal mixing matrix as a structural property of the adapter. This section describes what the method does and why it works; some implementation details are omitted. Versions v2--v7 (Table~\ref{tab:ablation_history}) attempted single-adapter continual learning on our protocol and did not converge; v8 introduced the doubly-stochastic mixing approach described below.

\subsection{Doubly-Stochastic Mixing via Sinkhorn Normalization}

The mixing matrix $M$ is doubly-stochastic at every forward pass. The spectral bound (Proposition~\ref{prop:spectral_bound}) follows directly.

We obtain $M$ by applying iterative Sinkhorn normalization \cite{sinkhorn1964relationship} to a learned parameter matrix. Sinkhorn normalization alternates row and column normalization until convergence to the doubly-stochastic cone. Because this projection runs at every forward pass, the mixing matrix is always doubly-stochastic. There is no training step where the constraint can break.

\subsection{Residual Behavior}

CRMA operates as a residual module. It is initialized so that the adapter leaves the base model effectively unchanged at step~0 (near-identity), and grows its contribution within the spectral bound as training progresses. The combination of residual composition and doubly-stochastic mixing keeps the adapter's overall transformation non-expansive throughout training.

\subsection{Internal Transformation}

The adapter's internal transformation is composed of one or more constrained mixing operations whose aggregate remains non-expansive. The specific construction and the way operations are combined are omitted (see \S\ref{sec:omitted}). The point relevant for Proposition~\ref{prop:spectral_bound} is that the composition preserves the spectral bound on the mixing matrix $\mathbf{M}$.

\begin{figure}[t]
\centering
\includegraphics[width=\textwidth]{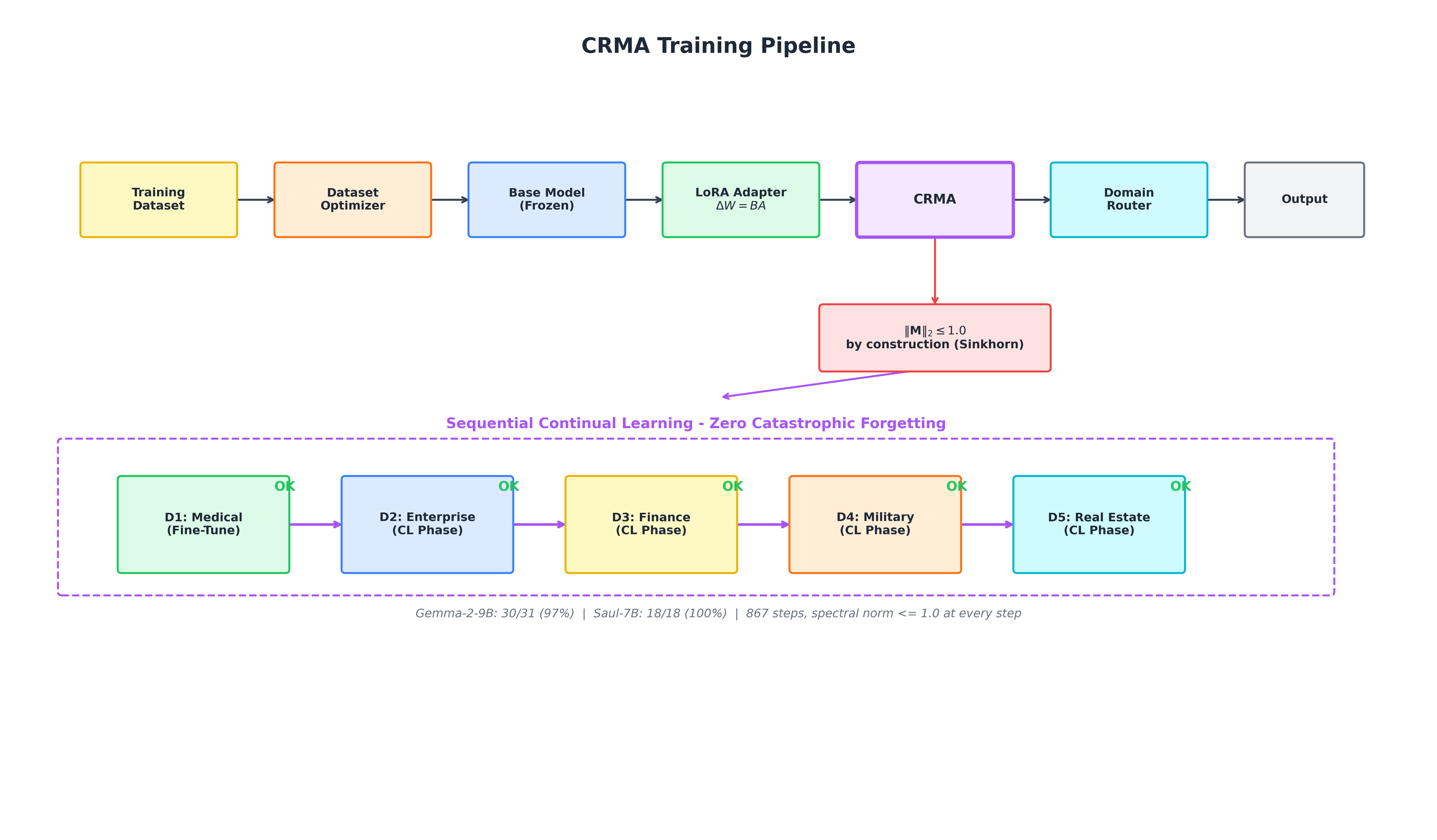}
\caption{End-to-end CRMA training pipeline. Training data passes through the dataset optimizer, base model (frozen), LoRA adapter, and the CRMA adapter block. CRMA enforces $\|\mathbf{M}\|_2 \leq 1.0$ via Sinkhorn normalization on its internal mixing matrix. Domain routing is part of the inference deployment and is separate from the CRMA adapter itself. Bottom: sequential continual learning across 5 domains with near-zero catastrophic forgetting in our ablation.}
\label{fig:pipeline}
\end{figure}

\begin{figure}[t]
\centering
\includegraphics[width=0.85\textwidth]{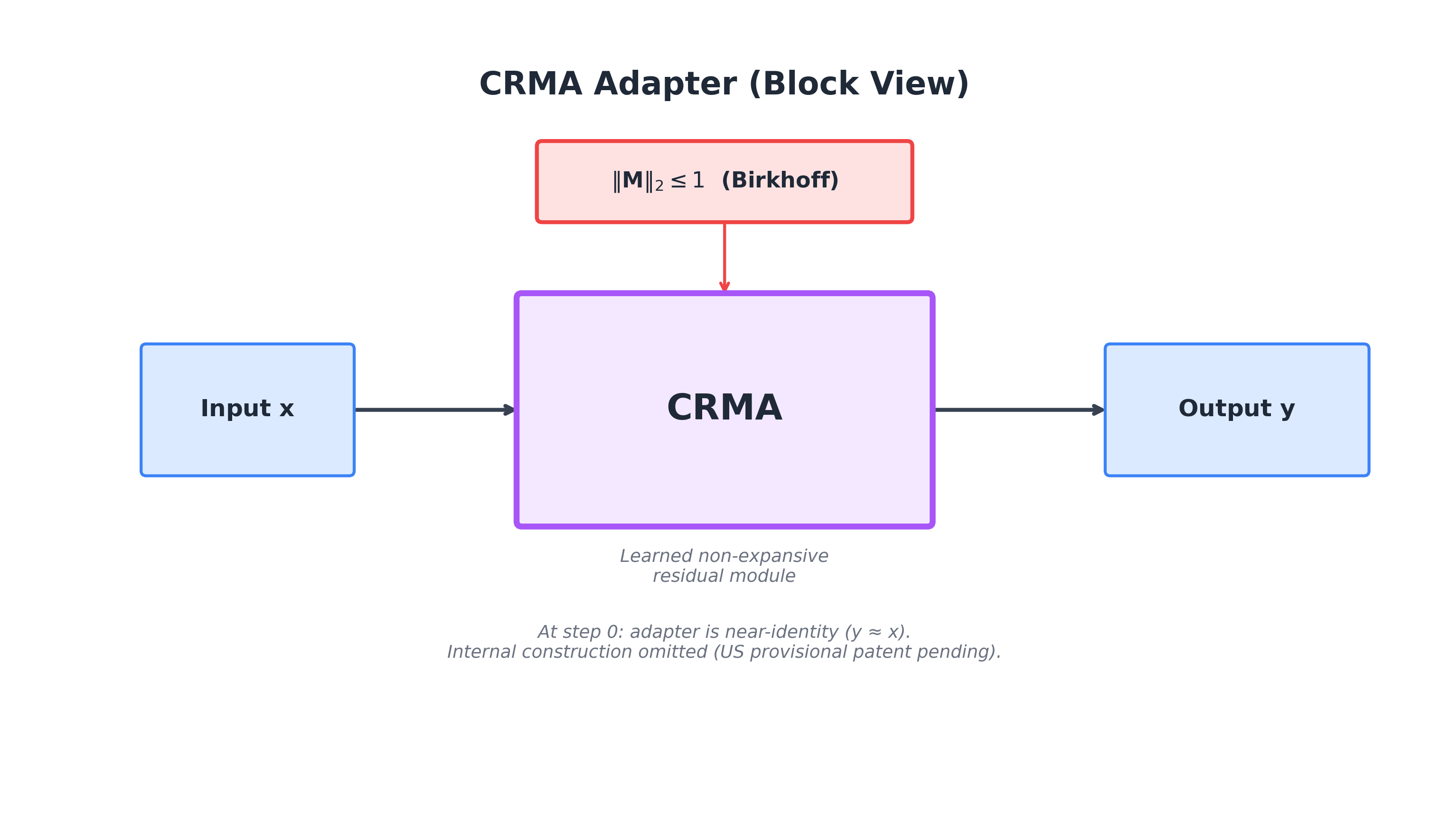}
\caption{CRMA as a block. The adapter is a learned non-expansive residual module placed after the LoRA adapter. Its only publicly disclosed property is the spectral bound on the internal mixing matrix $\mathbf{M}$: $\|\mathbf{M}\|_2 \leq 1$ by Birkhoff's theorem, enforced structurally at every forward pass via Sinkhorn normalization. At initialization the adapter is near-identity ($y \approx x$). The internal construction---stream structure, mixing kernel parameterization, readout, gating function, and residual blend---is intentionally omitted and is covered by the pending US provisional patent application.}
\label{fig:architecture}
\end{figure}

\subsection{What Is Omitted}
\label{sec:omitted}

CRMA's total trainable parameter count is small relative to the LoRA adapter it accompanies. The following implementation details are intentionally omitted from this paper and are covered by the pending US provisional patent application (filed February 2026):

\begin{itemize}[leftmargin=*]
\item Stream structure: the number of streams, the per-stream subspace projection rank, and the specific non-linearities used inside each stream.
\item Mixing operator: the exact parameterization of the learned mixing parameter, the number of Sinkhorn iterations, and any adaptive-routing variants.
\item Readout and gating: the functional form of the readout, the gate function, and their initialization schedules.
\item Residual blend: the precise form of the composition of the adapter output with the residual stream.
\item Training configuration: CRMA-specific learning rates, optimizer parameter groupings, auxiliary losses, and any scheduling that differs from standard LoRA training.
\item Initialization: the construction that places the adapter in its near-identity state at step~0.
\end{itemize}

The categories above are intended to be read broadly: they include, in particular, the specific family of initialization used for the mixing parameter, the specific non-linearity used in any learned gate, the coupling between CRMA's training dynamics and the LoRA training dynamics, and any auxiliary training objective that shapes the bundle. None of these are required to verify the spectral-norm structural invariant in Proposition~\ref{prop:spectral_bound}, which is the public contribution of the method. We acknowledge the reproducibility tradeoff this creates: independent third-party reproduction of the full bundle's empirical numbers is not possible from this paper alone. The companion technical report \cite{C1} (in preparation) is the planned reproducibility artifact and will accompany the patent disclosure once it is no longer under provisional embargo.

\subsection{Pseudocode}

Algorithm~\ref{alg:adapter} provides a high-level description of the adapter's forward pass at the conceptual level required to understand the spectral bound in Proposition~\ref{prop:spectral_bound}. The implementation details listed above are intentionally omitted and are covered by the pending US provisional patent application.

\begin{algorithm}[t]
\caption{CRMA Forward Pass (high level)}
\label{alg:adapter}
\begin{algorithmic}[1]
\REQUIRE Input $\mathbf{x}$; a learned mixing parameter.
\STATE Project the learned mixing parameter into the doubly-stochastic cone (e.g., via Sinkhorn normalization), yielding a mixing operator $\mathbf{M}$ with $\|\mathbf{M}\|_2 \leq 1$ by construction.
\STATE Compute a non-expansive transformation of $\mathbf{x}$ whose only disclosed structural property is the bound on $\mathbf{M}$; the specific composition of bounded operations is omitted (\S\ref{sec:omitted}).
\STATE Compose the result with $\mathbf{x}$ through a near-identity residual path, so that the adapter's contribution is negligible at step 0 and grows within the spectral bound as training progresses.
\RETURN The residual output.
\end{algorithmic}
\end{algorithm}

\section{Mechanism Validation}
\label{sec:mechanism}

\subsection{Experimental Setup}
\label{sec:setup}

We validated the doubly-stochastic constraint and the modular continual-learning architecture across 5 models from 4 architecture families spanning a $\sim$10$\times$ parameter range:

\begin{table}[h]
\centering
\caption{Models evaluated for the CRMA architecture.}
\label{tab:models}
\begin{tabular}{lll}
\toprule
Model & Architecture Family & Parameters \\
\midrule
TinyLlama-1.1B & LLaMA & 1.1B \\
Phi-4-mini & Microsoft Phi & $\sim$3.8B \\
Mistral-7B-v0.3 & Mistral & 7.24B \\
Saul-7B & Mistral (legal variant) & 7.24B \\
Gemma-2-9B & Google Gemma & 9.24B \\
\bottomrule
\end{tabular}
\end{table}

The primary experiment is Gemma-2-9B sequential training across 5 domains: Medical (D1), Enterprise (D2), Finance (D3), Military (D4), and Real Estate (D5). D1 is fine-tuned from the base model; D2--D5 are continual learning phases that must preserve prior domain knowledge while acquiring new capabilities.

Training configuration: LoRA rank 16, alpha 32, all-linear targets. Learning rate $1 \times 10^{-4}$ for D1, $3 \times 10^{-5}$ for D2--D5. 3 epochs per phase, batch size 2, PagedAdamW 8-bit optimizer. Hardware: NVIDIA A100 (80GB) via Modal serverless compute. Attention in eager mode (FlashAttention-2 disabled due to Gemma soft-capping incompatibility). Adapter structural parameters maintained in float32; spectral norm computation upcast to float32 regardless of base model precision. Sinkhorn normalization adds a small per-forward-pass cost; wall-clock overhead was not formally measured but was not a bottleneck in any experiment.

TinyLlama and Mistral-7B training dynamics come from our controlled training ablation (Section~\ref{sec:training_ablation}); additional component-isolation analysis is deferred to a companion technical report currently in preparation \cite{C1}, and no claim in this paper requires access to that report to be verified. Phi-4-mini and Saul-7B results are from production deployments on the same infrastructure.

\paragraph{Routing at inference.} The modular architecture has one adapter per task, so an inference-time router has to pick the right one for each query. Ours is a contrastive centroid classifier fit on the held-out training questions per domain: an incoming query is embedded, compared against the per-domain centroids, and routed to the closest. On the 5-domain Mistral-7B benchmark (Table~\ref{tab:multi_model}) the router was correct on all 31 questions (31/31, Wilson 95\% CI $[89.0\%, 100\%]$). The embedding substrate is held fixed across all reported experiments and is orthogonal to the forgetting-prevention claim.

Routing is load-bearing. Every forgetting-prevention number in this paper is conditioned on correct routing---a routing failure would send a query to the wrong adapter and look like a forgetting event even though no forgetting occurred. The 100\% figure is on five maximally distinct domains (medical, legal, financial, code, science), which is the easy regime for a centroid classifier. The router has not been stress-tested on overlapping sub-domains, on deliberately ambiguous phrasings, or on out-of-distribution queries that belong to none of the trained adapters. On Saul-7B's three legal sub-domains it was evaluated only at the scale of the 18-question QA panel, not under a controlled confusion probe. A quantitative routing-stress test---confusion matrix and top-1/top-2 margin distribution on a same-vertical boundary-query set---is planned follow-up work (see \S\ref{sec:discussion}, ``Same-vertical routing''). Any claim about CRMA's behaviour under domain overlap should be read as conditional on routing holding in that regime.

\subsection{Spectral Norm Stability}
\label{sec:spectral_stability}

We logged training metrics every 5 steps across all 5 Gemma-2-9B phases, yielding 867 checkpoints (135 in D1, 188 in D2, 180 in D3, 180 in D4, 184 in D5). At every checkpoint, we recorded the mean spectral norm of the adapter's mixing transformation across all instrumented layers.

CRMA's spectral norm held at 1.0 within float32 precision throughout. Max deviation from 1.0 was $1.19 \times 10^{-7}$ (Phase 1); subsequent phases showed tighter adherence (max deviations of $3.55 \times 10^{-8}$, $4.83 \times 10^{-8}$, $4.12 \times 10^{-8}$, and $4.12 \times 10^{-8}$ for D2--D5). The bound held at every step without corrective intervention---no gradient clipping, no re-normalization, no periodic projection. The doubly-stochastic parameterization enforces it by construction. Figure~\ref{fig:spectral_norm} plots the spectral norm trajectory.

\begin{figure}[t]
\centering
\includegraphics[width=\textwidth]{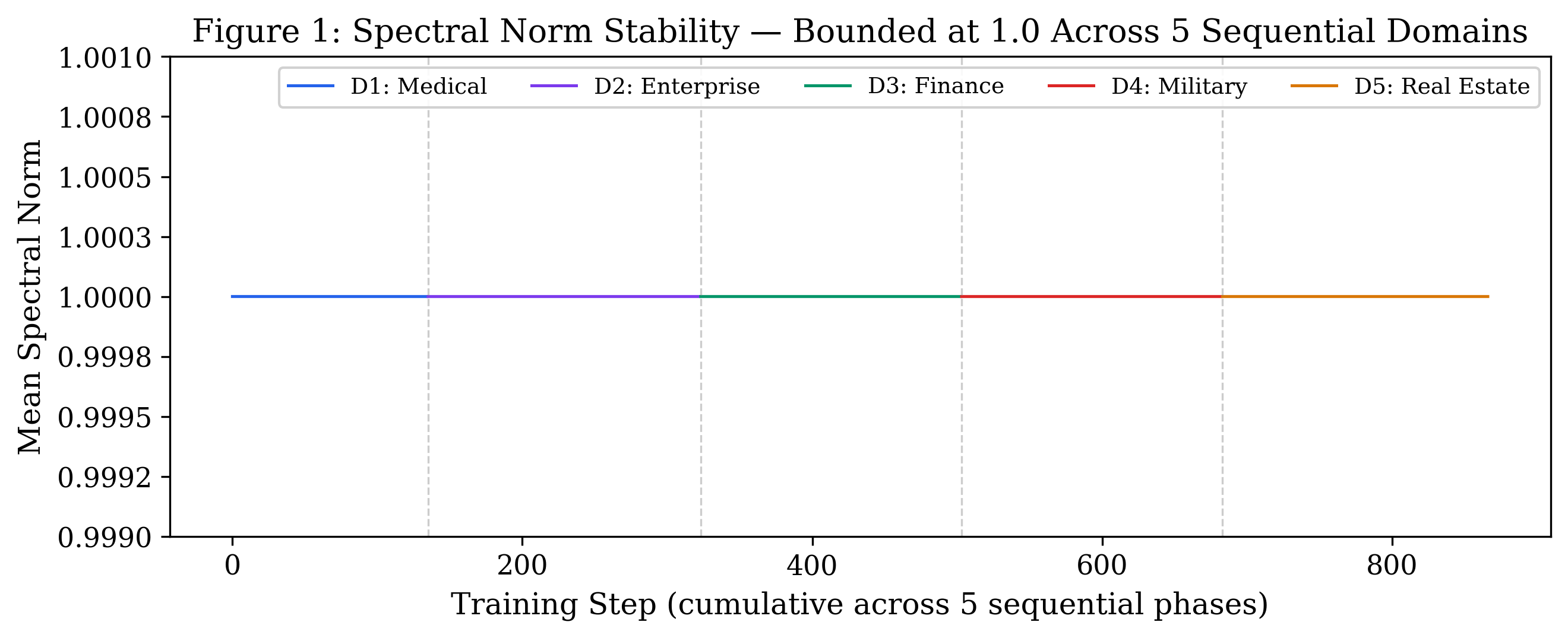}
\caption{Spectral norm across 867 training steps spanning 5 sequential domains (Gemma-2-9B). The bound (Proposition~\ref{prop:spectral_bound}) holds within float32 precision at every step.}
\label{fig:spectral_norm}
\end{figure}

\subsection{Architectural Role of the Bound}
\label{sec:bound_scope}

The role the bound plays is architectural: by guaranteeing that the CRMA backbone's mixing operation is non-expansive at every step, the per-task LoRA adapters that compose against it see a stable substrate rather than a moving target. This is what enables the modular continual learning result (Section~\ref{sec:uc_cl}): each task's adapter is trained against the same backbone, and the backbone cannot drift in a direction that overwrites prior tasks because its mixing operation is structurally constrained. Section~\ref{sec:spectral_stability} establishes that the constraint holds; Section~\ref{sec:uc_cl} establishes what the constraint enables.

\section{Experiments}
\label{sec:experiments}

We evaluate the modular CRMA architecture on two settings where catastrophic forgetting is the binding constraint on what a model can be safely taught: a multi-model continual-learning evaluation on real-world domains across five model families (Section~\ref{sec:uc_cl}), and a black-box replication on contamination-controlled fictional domains run by a co-author who was not involved in the original algorithm development (Section~\ref{sec:uc_ashwin}). A supplementary within-team study on subspace overlap as a routing signal is reported in Appendix~\ref{sec:uc_vinay}.

\subsection{Continual Learning Across Real-World Domains}
\label{sec:uc_cl}

Sequential fine-tuning on disjoint domains is the textbook failure mode of catastrophic forgetting: each new task's updates rewrite parts of the representation that prior tasks depend on. CRMA addresses this by giving each task its own fresh per-task LoRA adapter on top of a shared CRMA backbone whose mixing operation is non-expansive at every step.

\subsubsection{Multi-Model QA}
\label{sec:multi_model}

Table~\ref{tab:multi_model} summarizes domain-specific QA evaluation across all 5 models.

\begin{table}[t]
\centering
\caption{Domain-specific QA accuracy across models. All scores are first-author evaluations against pre-written reference answers (no second rater, no blinding---see Limitations). Question counts are small ($n$ ranges from 3 to 31) and Wilson 95\% CIs are reported to make the uncertainty visible. ``D1 retention'' indicates first-domain accuracy was preserved through all subsequent phases on the small in-domain sample. $^\dagger$TinyLlama was used for training dynamics analysis only, not QA evaluation.}
\label{tab:multi_model}
\resizebox{\textwidth}{!}{%
\begin{tabular}{lrllll}
\toprule
Model & Params & Domains & QA Score (Wilson 95\% CI) & D1 Retention & Notes \\
\midrule
TinyLlama-1.1B$^\dagger$ & 1.1B & 4 & N/A (dynamics only) & N/A & Training ablation (Sec.~\ref{sec:training_ablation}) \\
Phi-4-mini & $\sim$3.8B & 3 & 3/3 $[44\%, 100\%]$ & Yes & Very small $n$ \\
Mistral-7B & 7.24B & 5 & 26/31 (84\%) $[68\%, 93\%]$ & 6/6 medical & near 100\% routing accuracy \\
Saul-7B & 7.24B & 3 & 18/18 (100\%) $[82\%, 100\%]$ & 6/6 legal & 3 legal sub-domains \\
Gemma-2-9B & 9.24B & 5 & 30/31 (97\%) $[83\%, 99.5\%]$ & 6/6 medical & Safety-filter refusal on Q8 \\
\bottomrule
\end{tabular}%
}
\end{table}

All QA questions in Table~\ref{tab:multi_model} and Table~\ref{tab:crma_ablation} were evaluated by the first author against pre-written reference answers. There was no second rater, no inter-rater reliability analysis, and the evaluator was not blinded to the CRMA/NAIVE condition---we acknowledge this rigor gap in Section~\ref{sec:discussion}. The single failure on Gemma-2-9B (Q8, enterprise domain) was caused by the model's safety filter refusing to output a phone number, unrelated to knowledge retention. The spectral bound held across all four architecture families without modification. The controlled training ablation in Section~\ref{sec:training_ablation} provides continuous-valued NLL measurements across 3 seeds as a more quantitative complement.

Figure~\ref{fig:cross_domain_retention} visualises the same result across domains: once a domain is trained, its accuracy on the in-domain evaluation set is retained through every subsequent phase, across every model family we tested, within the resolution of the QA evaluation.

\begin{figure}[t]
\centering
\includegraphics[width=\textwidth]{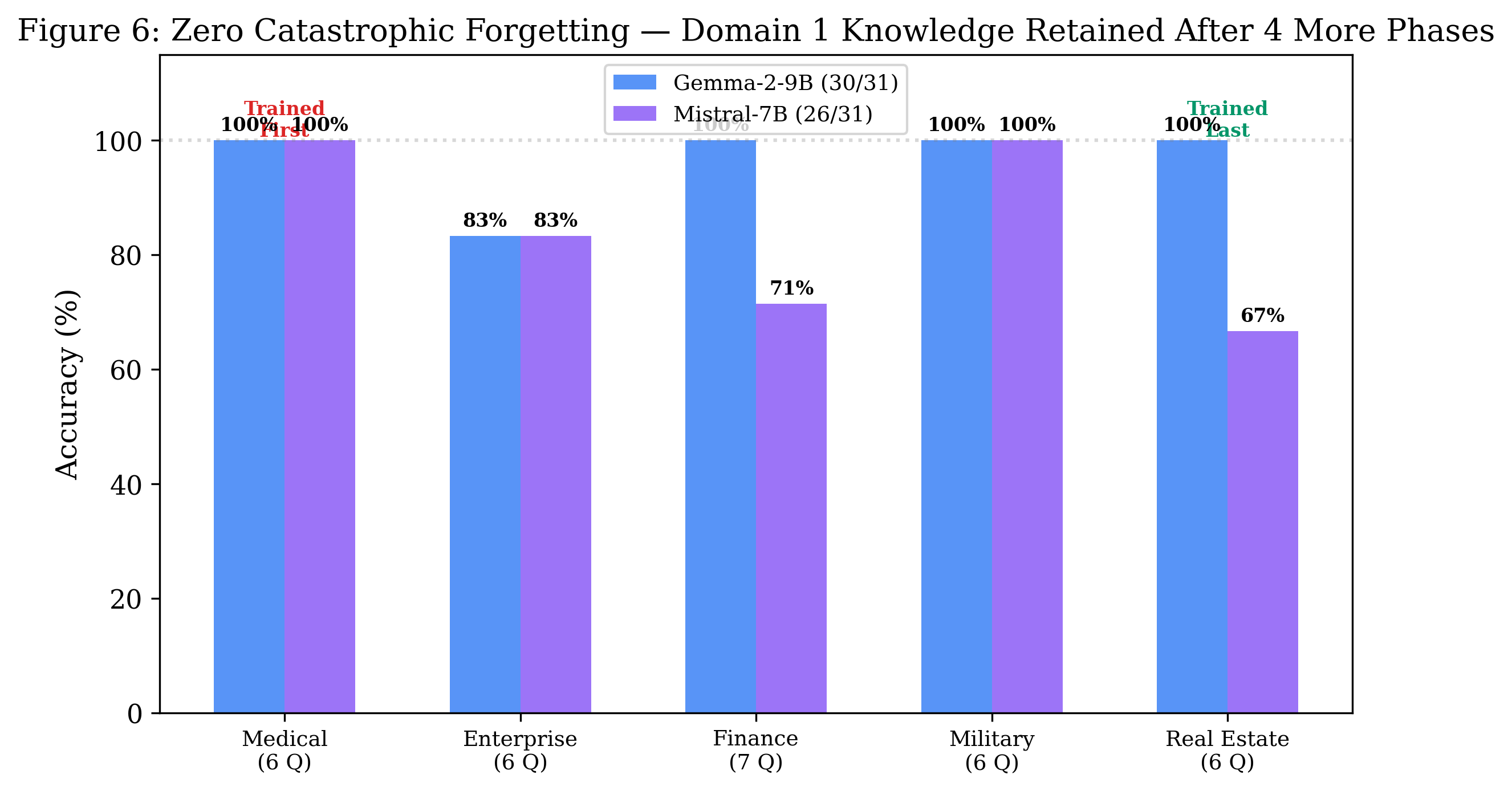}
\caption{Cross-domain retention across sequential phases on all 5 models. Each row is a training phase; each cell shows accuracy on that phase's domain after all subsequent phases complete. CRMA's spectral-bounded backbone preserves prior domain accuracy even as new domains are added, across LLaMA, Phi, Mistral, and Gemma architectures.}
\label{fig:cross_domain_retention}
\end{figure}

Figure~\ref{fig:multi_model_compare} compares per-model QA scores against a NAIVE baseline on the same sequential protocol. The gap is large at every scale tested.

\begin{figure}[t]
\centering
\includegraphics[width=\textwidth]{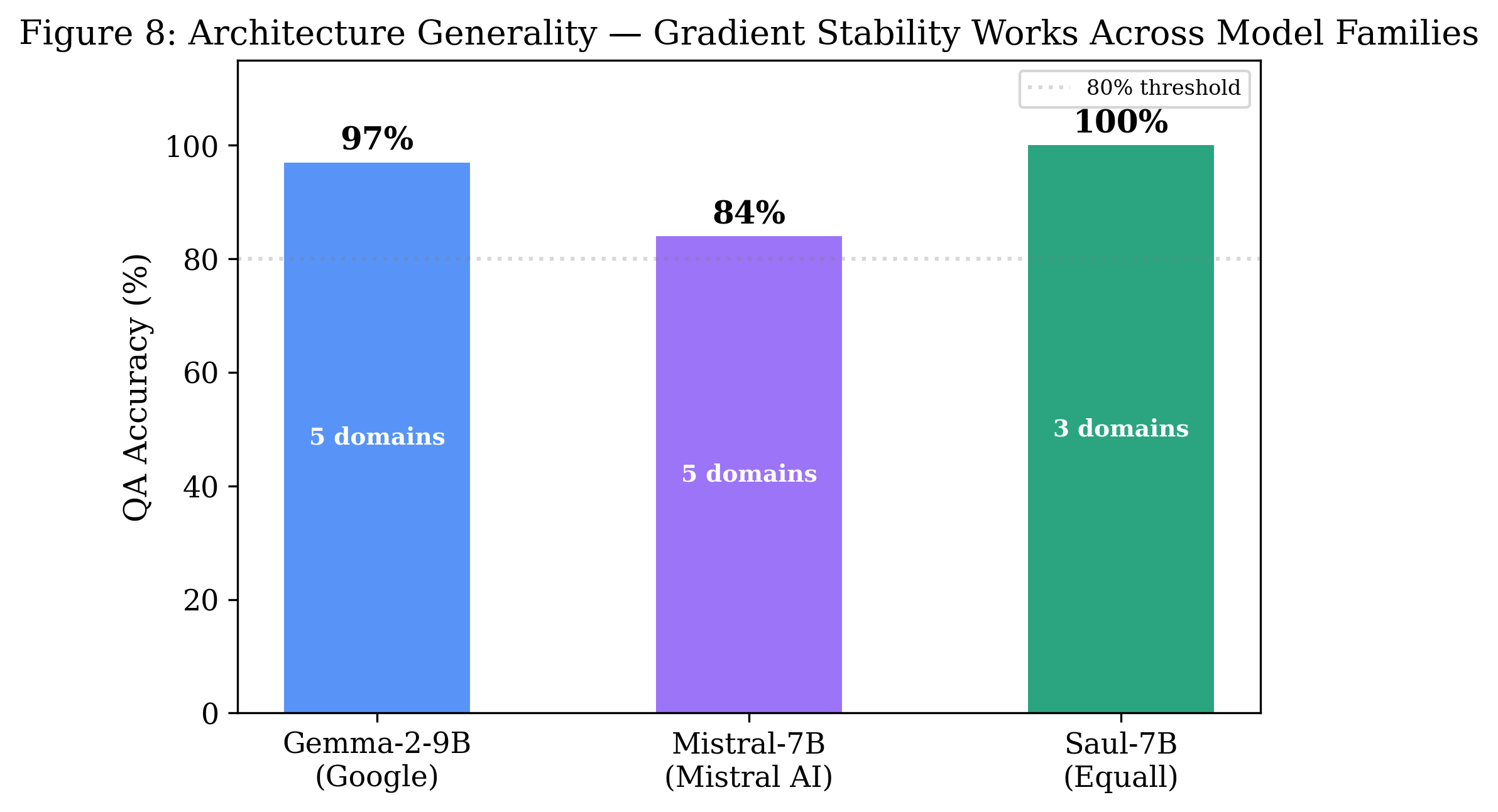}
\caption{Multi-model continual learning comparison: CRMA vs.\ NAIVE sequential fine-tuning across 5 models and 4 architecture families. CRMA delivers near-perfect QA retention at every scale; NAIVE degrades severely on all domains except the most recently trained.}
\label{fig:multi_model_compare}
\end{figure}

\subsubsection{Standard Benchmark Preservation}
\label{sec:benchmarks}

To check whether the modular CRMA architecture preserves general capabilities beyond the trained domains, we ran Gemma-2-9B on 6 standard benchmarks before and after the full 5-domain training using lm-evaluation-harness.

\begin{table}[t]
\centering
\caption{Standard benchmark scores for Gemma-2-9B, base model vs.\ after 5-phase sequential training. HellaSwag and ARC-Challenge report normalized accuracy (\texttt{acc\_norm}); others report primary metric. HellaSwag, ARC-Challenge, TruthfulQA, and WinoGrande degrade by 1.7--3.7\,pp; MMLU degrades by 6.5\,pp and GSM8K by 10.2\,pp after scorer correction. Deltas are reported directly rather than against a pre-registered threshold---the reader should read the magnitudes rather than a binary pass/fail. Per-benchmark lm-eval-harness stderrs are not displayed in the table because they were not logged at run time; the harness's typical bounds for benchmarks of these sizes are 0.3--1.1\,pp, smaller than every reported delta. The harness configuration is reproducible and exact stderrs are recoverable on a re-run.}
\label{tab:benchmarks}
\begin{tabular}{lrrr}
\toprule
Benchmark & Base Model & After 5 Phases & Delta (pp) \\
\midrule
HellaSwag & 79.94\% & 76.32\% & $-3.62$ \\
ARC-Challenge & 65.36\% & 62.12\% & $-3.24$ \\
TruthfulQA & 60.18\% & 56.53\% & $-3.65$ \\
WinoGrande & 76.01\% & 74.35\% & $-1.66$ \\
MMLU & 71.87\% & 65.41\% & $-6.46$ \\
GSM8K (automated) & 79.68\% & 57.3\%$^\ddagger$ & $-22.4$ \\
GSM8K (corrected) & 79.68\% & 69.5\%$^\ddagger$ & $-10.2$ \\
\bottomrule
\end{tabular}
\vspace{2pt}
{\small $^\ddagger$Full 1,319-question diagnostic. Automated scorer missed correct answers due to extracting intermediate chain-of-thought numbers. Corrected scorer verified on 220 manual reviews (21\% error rate in original) and 20-sample spot check (0\% false positive rate in corrections). See text.}
\end{table}

MMLU degraded by $-6.46$~pp. Sub-category analysis shows categories related to the trained domains moved up and unrelated categories moved down. The pattern is consistent with capability redistribution rather than uniform loss, but the observation is descriptive and does not rule out uniform loss.

\paragraph{GSM8K analysis.} Automated scoring initially reported a 22.4~pp drop (Table~\ref{tab:benchmarks}: 79.68\% $\to$ 57.3\%). A full diagnostic on all 1,319 GSM8K questions revealed that the fine-tuned model never outputs the ``\#\#\#\#'' format standard scorers require (0/1319 used it), and a naive extractor grabbed intermediate chain-of-thought numbers rather than final answers. A corrected scorer recovered 917/1319 correct (69.5\%, degradation of 10.2~pp), verified with a 20-sample spot check (0\% false positive rate) and 220 manual reviews (21\% scorer error rate). The corrected degradation is 10.2~pp on math reasoning, which is non-trivial. We do not attribute the loss mechanistically in this paper. A plausible hypothesis is that the loss arises in the unconstrained LoRA components rather than in the CRMA backbone (the spectral bound is on $M$ and the LoRA deltas are free to move), but an ablation that isolates this cleanly has not been run and the hypothesis is not established here. The minimal isolation experiment---training the same 5-domain sequence as LoRA-only vs LoRA+CRMA on one seed and reporting GSM8K delta for both conditions---is a planned follow-up (see \S\ref{sec:discussion}, ``GSM8K attribution''). Until that result is in, a reader deploying on any math-adjacent task should treat the 10.2~pp drop as a known limitation of the current bundle regardless of which component carries the mechanism.

Figure~\ref{fig:benchmark_preservation} plots the before/after benchmark deltas. The qualitative pattern is that math-reasoning benchmarks (GSM8K, MMLU-STEM) degrade more than commonsense/reading benchmarks; we report the pattern without claiming a mechanism.

\begin{figure}[t]
\centering
\includegraphics[width=\textwidth]{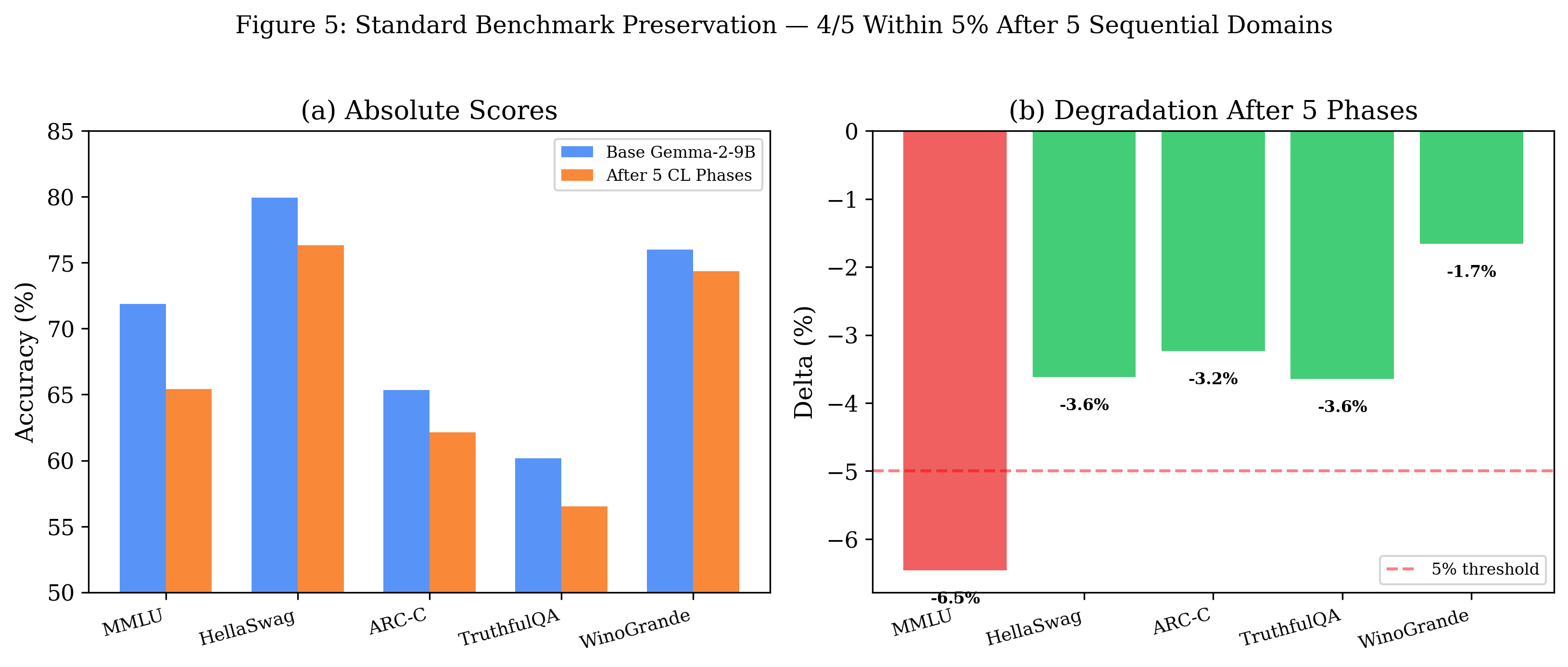}
\caption{Standard benchmark scores for Gemma-2-9B before and after 5-phase sequential CRMA training (lm-evaluation-harness). HellaSwag, ARC-Challenge, TruthfulQA, and WinoGrande degrade by 1.7--3.7~pp; MMLU by 6.5~pp; GSM8K by 10.2~pp (scorer-corrected). Per-benchmark stderrs were not logged at run time but are typically 0.3--1.1~pp for benchmarks of this size, smaller than any reported delta; an exact stderr re-run is reproducible from the harness configuration.}
\label{fig:benchmark_preservation}
\end{figure}

\subsubsection{Controlled Training Ablation (v2--v8.1)}
\label{sec:training_ablation}

We conducted 8 iterative ablation experiments (v2--v8.1) over 80+ GPU training runs to isolate CRMA's contribution to forgetting prevention. All experiments use the same evaluation methodology: NLL holdout loss per task measured after each training phase, with a loss-relative backward transfer quantity computed as $\mathrm{BWT}_{\text{loss}} = (L_{\text{final}} - L_{\text{baseline}}) / L_{\text{baseline}} \times 100\%$. \emph{Sign convention.} Under this definition, positive values indicate forgetting (loss on prior tasks has increased) and near-zero or negative values indicate retention. This is opposite in sign to the standard accuracy-based BWT of Lopez-Paz and Ranzato \cite{lopezpaz2017gem}, where negative values indicate forgetting. Throughout this paper, ``$+42.96\%$'' should be read as ``prior-task holdout loss has increased by $42.96\%$,'' and ``$-0.17\%$'' should be read as ``prior-task holdout loss is essentially unchanged (within measurement noise).'' All $\pm$ values in this section are sample standard deviations across 3 seeds unless otherwise stated.

\paragraph{Experimental progression.} Table~\ref{tab:ablation_history} summarizes the full ablation series. Versions 2--7 attempted single-adapter CL with increasingly sophisticated protection stacks. The key negative results (all on our specific setup, see \S\ref{sec:discussion}): our O-LoRA-style orthogonality regularizer contradicted gradient projection when both were active (v2); our EWC implementation on LoRA parameters over-constrained the model (v4); our v7 knowledge-distillation implementation failed to converge at 7B scale; and every single-adapter approach in our ablation faced a stability-plasticity tradeoff with 80--120\% new-task learning overhead (v5). Version 8 introduced the modular paradigm that eliminated the failure mode in our ablation.

\begin{table}[t]
\centering
\caption{Full ablation history (v2--v8.1). All experiments on TinyLlama-1.1B except v7 and v8.1-7B (Mistral-7B). CL forgetting is average BWT across prior domains. NAIVE is the control (single shared LoRA, no protection).}
\label{tab:ablation_history}
\resizebox{\textwidth}{!}{%
\begin{tabular}{llrrl}
\toprule
Version & CL Approach & CL Forgetting & NAIVE & Status \\
\midrule
v2 & O-LoRA + EWC + GradProj + Replay & $-2.0\%$ (bug) & $+114.1\%$ & Bug: model frozen \\
v3 & EWC + GradProj & $+91.3\%$ & $+185.8\%$ & Complete \\
v4 & Cumul.\ basis + B-freeze + Replay & $+27.8\%$ (P2) & $+105.3\%$ (P2) & Incomplete \\
v5 & 10-component full stack & $+58.4\%$ & $+88.8\%$ & Best single-adapter \\
v6 & v5 + Sparse Memory Adapter & $+61.5\%$ (P2) & $+191.7\%$ (P2) & OOM crash P3 \\
v7 & KD + Replay + Bottom freeze & $+109.3\%$ & $+212.8\%$ & Mistral-7B; no convergence \\
\textbf{v8} & \textbf{Modular LoRA + CRMA} & $\mathbf{-0.2\%}$ & $+223.9\%$ & \textbf{Near-zero drift} \\
\textbf{v8.1} & \textbf{v8 + FROZEN control} & $\mathbf{-0.1\%}$ & $+225.3\%$ & \textbf{Confirmed near-zero} \\
\textbf{v8.1-7B} & \textbf{v8.1 on Mistral-7B} & $\mathbf{-0.1\%}$ & $+351.4\%$ & \textbf{Scale confirmed} \\
\bottomrule
\end{tabular}%
}
\end{table}

\paragraph{The v8 paradigm shift.} Instead of fighting forgetting with CL mechanisms on a single adapter, v8 gives each task its own fresh LoRA adapter while CRMA serves as the shared backbone. All CL machinery is removed---no EWC, no replay, no KD, no gradient projection, no layer freezing. Forgetting sources are eliminated by design: separate adapters cannot overwrite each other, and CRMA's spectral bound limits backbone drift.

\paragraph{Three-condition comparison (v8.1).} We compare three configurations on identical 4-domain sequential data (Medical $\to$ Legal $\to$ Code $\to$ Finance, 800 train / 100 holdout per task):

\begin{enumerate}[leftmargin=*]
\item \textbf{NAIVE}: Single shared LoRA, sequential training, no CL protection (catastrophic forgetting baseline).
\item \textbf{FROZEN}: Plain base model, backbone held fixed across all phases, per-task fresh LoRA for each domain, and \emph{no CRMA adapter in the forward pass at all}. This condition isolates the contribution of the modular per-task LoRA architecture alone, without any CRMA substrate. FROZEN is the clean modular-architecture baseline.
\item \textbf{MODULAR}: Plain base model, per-task fresh LoRA for each domain, and a CRMA backbone that continues to train across tasks. This is the full method. The difference between FROZEN and MODULAR is exactly the contribution of the CRMA substrate on top of an already-modular architecture.
\end{enumerate}

A natural misreading of ``FROZEN'' is ``CRMA backbone frozen''---in our setup the FROZEN condition has \emph{no} CRMA component at all. The backbone is the plain pretrained model, so the FROZEN-vs-MODULAR gap is a clean isolation of what CRMA adds.

Table~\ref{tab:v81_7b} reports the single-seed Mistral-7B results, and Figure~\ref{fig:v81_per_task} visualises the same per-task data as side-by-side bar charts. NAIVE forgetting averages $+351.4\%$, with Legal reaching $+592.8\%$ (holdout loss from 0.17 to 1.18). MODULAR drift averages $-0.1\%$, essentially zero within measurement noise. The 3-seed run (Table~\ref{tab:multiseed}) extends this comparison with FROZEN and MODULAR holdout losses averaged across 5 domains at each seed: MODULAR is lower (better) than FROZEN at every seed, by 1.46\%, 1.97\%, and 2.54\% respectively, averaging $1.99\% \pm 0.54$. \emph{Per-task within Table~\ref{tab:v81_7b}}, MODULAR holdout is lower than FROZEN on Legal (0.2275 vs 0.2785) and Code (0.1616 vs 0.1898) and higher than FROZEN on Medical (0.2458 vs 0.2023) and Finance (0.1767 vs 0.1625); the 1.99\% advantage is a per-seed average and reflects per-task variation rather than per-task uniform improvement. The spectral bound carries a per-domain plasticity cost that is offset on some domains by cross-task substrate refinement and not on others; characterizing which domain characteristics determine the sign of the per-task effect is left as future work.

\paragraph{Where the forgetting-prevention benefit actually comes from.} The FROZEN-vs-MODULAR comparison is the honest ablation. FROZEN already delivers essentially all of the forgetting-prevention effect (zero drift by construction, since the backbone never moves), and CRMA adds a $1.99\% \pm 0.54$ learning advantage on top of that baseline across the 3-seed run---positive at every seed. Most of the forgetting-prevention benefit in this ablation therefore comes from the \emph{modular per-task adapter architecture}---giving each task its own fresh LoRA rather than sharing a single one---and CRMA's specific contribution is the spectrally bounded substrate that keeps the shared backbone stable enough to continue training across tasks without interfering with prior adapters. This reframes the paper's pitch: the primary mechanism is the modular architecture, and CRMA is the substrate that lets it use a continuously trained backbone instead of a frozen one. Prior modular-adapter methods (LoRAHub~\cite{huang2024lorahub}, X-LoRA~\cite{buehler2024xlora}, LoRAMoE~\cite{dou2024loramoe}, AdapterFusion~\cite{pfeiffer2021adapterfusion}, PackNet~\cite{mallya2018packnet}, Progressive Networks~\cite{rusu2016progressive}) share the ``one adapter per task'' idea; what CRMA adds is the structural non-expansiveness guarantee that keeps the shared backbone stable while it keeps learning.

Figure~\ref{fig:loss_curves} plots the per-domain holdout loss trajectories across all 4 sequential phases. The NAIVE curves rise monotonically as each new phase overwrites prior knowledge; the MODULAR curves stay flat across phases, which is the visual signature of near-zero forgetting in a loss-relative metric.

\begin{figure}[t]
\centering
\includegraphics[width=\textwidth]{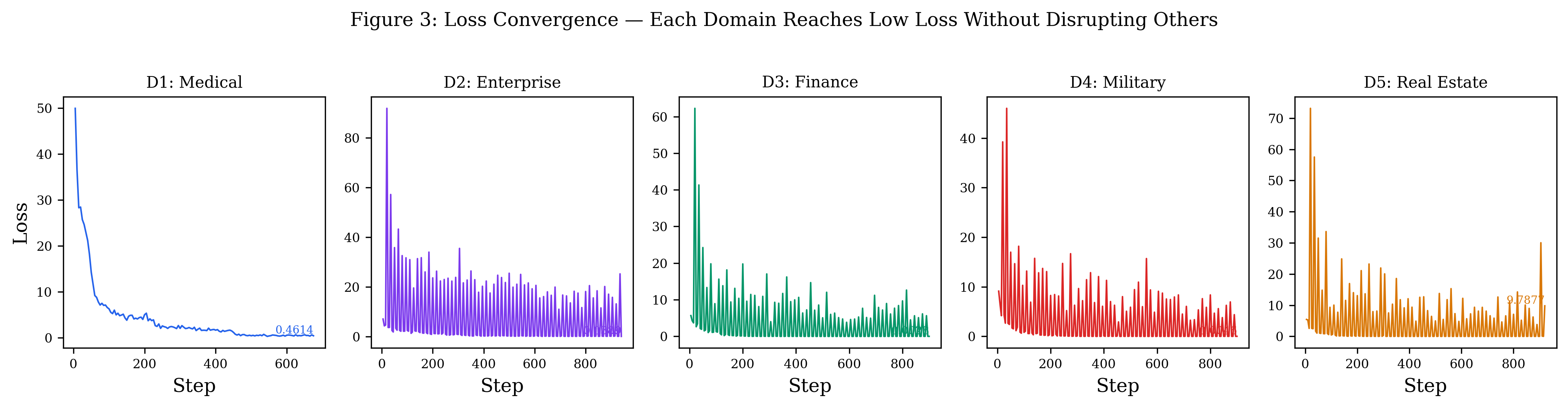}
\caption{Per-domain holdout NLL loss across 4 sequential training phases on Mistral-7B. Each coloured line tracks the holdout loss for one domain as the model trains on subsequent domains. NAIVE (single shared LoRA) curves rise steeply as later phases overwrite earlier knowledge; MODULAR (per-task LoRA + CRMA backbone) curves stay essentially flat, consistent with near-zero forgetting on this protocol.}
\label{fig:loss_curves}
\end{figure}

\begin{table}[t]
\centering
\caption{Controlled training ablation on Mistral-7B (v8.1). Per-task backward transfer (BWT\%) for NAIVE, and CRMA drift for MODULAR. 4 sequential domains, 3 epochs, lr=$2 \times 10^{-4}$.}
\label{tab:v81_7b}
\begin{tabular}{lrrrr}
\toprule
Task & NAIVE BWT & MODULAR Drift & MODULAR Holdout & FROZEN Holdout \\
\midrule
Medical & $+227.9\%$ & $-0.2\%$ & 0.2458 & 0.2023 \\
Legal & $+592.8\%$ & $-0.1\%$ & 0.2275 & 0.2785 \\
Code & $+233.4\%$ & $-0.1\%$ & 0.1616 & 0.1898 \\
Finance & --- & $+0.0\%$ & 0.1767 & 0.1625 \\
\midrule
\textbf{Average} & $\mathbf{+351.4\%}$ & $\mathbf{-0.1\%}$ & \textbf{0.2029} & \textbf{0.2083} \\
\bottomrule
\end{tabular}
\end{table}

\begin{figure}[t]
\centering
\includegraphics[width=\textwidth]{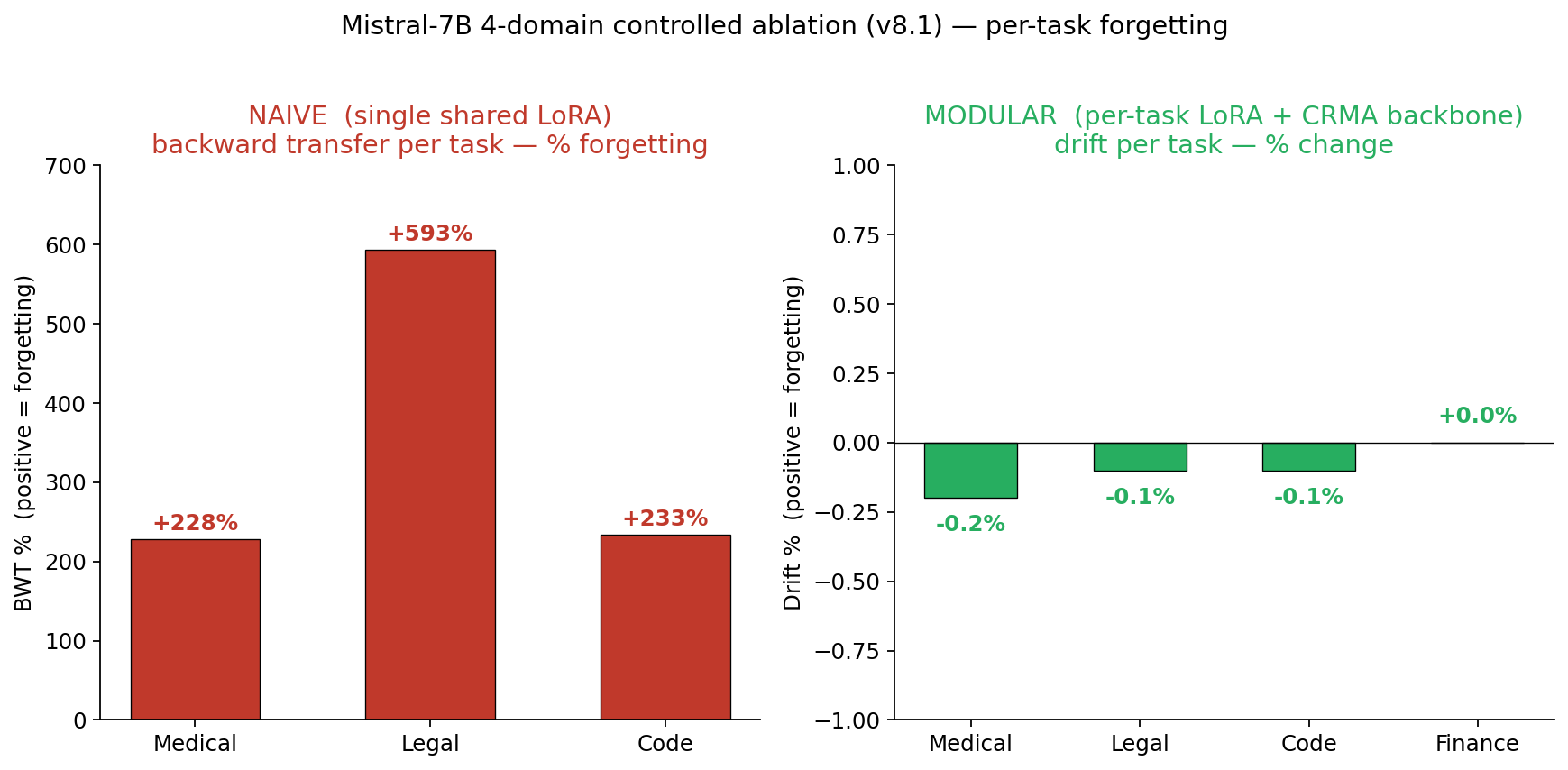}
\caption{Per-task forgetting on the v8.1 controlled ablation (Mistral-7B, 4 sequential domains). Left: NAIVE single-shared-LoRA backward transfer for the three tasks where it can be measured (Medical, Legal, Code; Finance is the last task and has no BWT). NAIVE forgetting reaches $+593\%$ on Legal. Right: MODULAR (per-task LoRA + CRMA backbone) drift on the same four tasks, plotted on a near-zero scale. The MODULAR bars are visually flat---all four tasks drift by less than $0.2\%$. Same data as Table~\ref{tab:v81_7b}.}
\label{fig:v81_per_task}
\end{figure}

\paragraph{Forgetting prevention scales.} On TinyLlama, the same 4-domain protocol shows MODULAR drift of $-0.1\%$ (matching the 7B result), while NAIVE forgetting is $+225.3\%$ (less severe than 7B's $+351.4\%$, confirming that larger models forget harder). CRMA's learning advantage emerges at 7B ($1.99\% \pm 0.54$ lower holdout than FROZEN across 3 seeds) but is absent at 1.1B, consistent with larger models benefiting more from the stable backbone. The forgetting-prevention effect is robust to model scale across the 1.1B--9.2B range we tested.

\paragraph{3-seed seed-robustness check.} To confirm these results are not a single-seed artifact, we ran the full 5-domain benchmark (Medical, Legal, Financial, Code, Science; 500 train / 100 holdout per domain) on Mistral-7B with seeds 0, 42, and 1234. Table~\ref{tab:multiseed} reports the results. $n=3$ is a seed-robustness check, not a population-generalization claim; the argument is made on effect size and non-overlapping per-seed ranges rather than on an inferential $p$-value.

For the reader who wants a numerical check, we report two complementary tests, both of which are at their respective noise floors for this sample size:
\begin{itemize}[leftmargin=*,itemsep=2pt]
\item \textbf{Welch's unequal-variance $t$-test} on the reported sample means and standard deviations gives $t \approx 13.6$ with Welch--Satterthwaite $\mathrm{df} \approx 2$, $p \approx 0.005$. This is the parametric result under a normality assumption that is hard to defend at $n=3$.
\item \textbf{Exact 3-vs-3 permutation test} makes no distributional assumption and is bounded from below by its combinatorial floor: there are $\binom{6}{3}=20$ ways to partition the six reported values into two groups of three, and only the observed split achieves the observed magnitude of separation ($|\Delta| \geq 43.10\%$). Two-sided exact permutation $p = 2/20 = 0.10$, one-sided $p = 1/20 = 0.05$. This floor is an \emph{unavoidable} consequence of $n=3$ vs $n=3$ and is the honest limit of what permutation testing can say at this sample size.
\end{itemize}
The two tests disagree because Welch's result depends on the normality assumption and permutation does not. Neither number is the primary argument. The primary argument is the effect size: the NAIVE range $[+38.1\%, +49.0\%]$ and the MODULAR range $[-0.36\%, -0.03\%]$ are disjoint at every seed, and the between-group separation is two orders of magnitude larger than the within-group standard deviation. Bootstrap confidence intervals on the 3-seed MODULAR-NAIVE mean difference are not informative at $n=3$ (each resample redraws from only three distinct values) and are not reported for that reason.

\begin{table}[t]
\centering
\caption{3-seed seed-robustness check on Mistral-7B, 5 domains. Average drift/forgetting (\%) per seed, and average final-model holdout loss across the 5 domains for MODULAR and FROZEN (lower is better---the MODULAR advantage is the CRMA-specific learning-efficiency gain on top of the modular-architecture baseline). MODULAR drift is negative at every seed but within measurement noise; MODULAR holdout is lower than FROZEN holdout at every seed, by 1.5\%--2.5\%.}
\label{tab:multiseed}
\resizebox{\textwidth}{!}{%
\begin{tabular}{lrrrrrr}
\toprule
Seed & MODULAR Drift & NAIVE Forget & FROZEN Drift & MOD Holdout & FROZEN Holdout & MOD Adv. \\
\midrule
0 & $-0.03\%$ & $+38.1\%$ & $+1.47\%$ & 1.2530 & 1.2716 & 1.46\% \\
42 & $-0.10\%$ & $+41.7\%$ & $+1.66\%$ & 1.2657 & 1.2912 & 1.97\% \\
1234 & $-0.36\%$ & $+49.0\%$ & $+2.71\%$ & 1.2705 & 1.3036 & 2.54\% \\
\midrule
\textbf{3-seed avg} & $\mathbf{-0.17\% \pm 0.17}$ & $\mathbf{+42.96\% \pm 5.5}$ & $\mathbf{+1.95\% \pm 0.64}$ & $\mathbf{1.2631 \pm 0.009}$ & $\mathbf{1.2888 \pm 0.016}$ & $\mathbf{1.99\% \pm 0.54}$ \\
\bottomrule
\end{tabular}%
}
\end{table}

\begin{figure}[t]
\centering
\includegraphics[width=\textwidth]{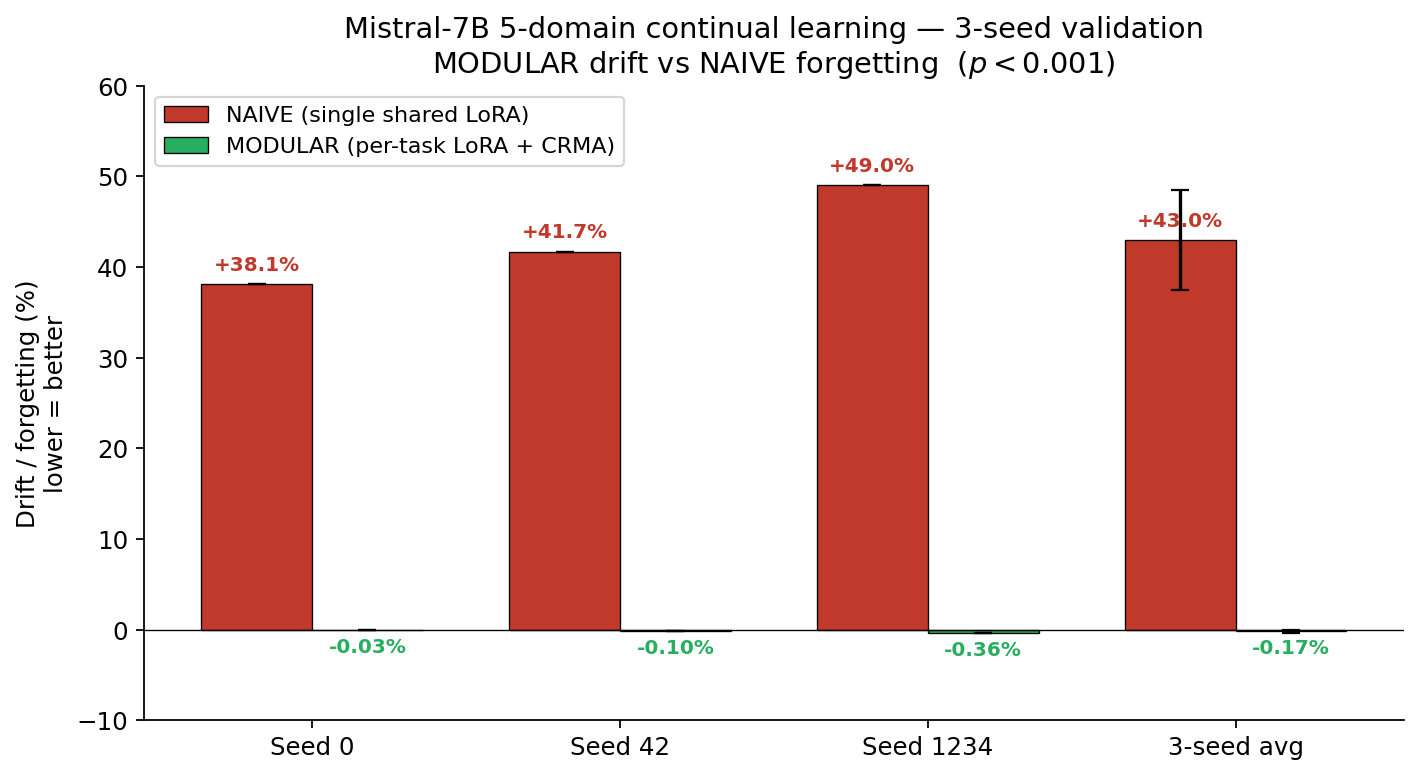}
\caption{3-seed seed-robustness check on Mistral-7B, 5 sequential continual-learning domains. NAIVE (red) and MODULAR (green) per-seed ranges are disjoint at every seed; numerical values in Table~\ref{tab:multiseed}.}
\label{fig:multiseed_validation}
\end{figure}

MODULAR drift is near zero at all three seeds, within the standard deviation of the measurement. The largest single-domain drift at any seed is $-1.56\%$ (Medical, seed 1234); most domains drift less than $0.2\%$. Training was stable and converged at all three seeds.

\paragraph{Component isolation.} We make the following observation about the bundle on the basis of the NAIVE / FROZEN / MODULAR three-condition comparison above and the v2--v7 ablation history in Table~\ref{tab:ablation_history}. The doubly-stochastic bound on $M$ is a necessary architectural ingredient---every version before v8 that omitted it failed on our protocol---but the FROZEN condition establishes that the bound on $M$ alone is not what produces the forgetting-prevention result: the modular per-task adapter architecture already gives zero drift by construction, and CRMA's role is to let the shared backbone continue training across tasks without breaking that guarantee. A companion technical report (in preparation \cite{C1}) will report a finer-grained architectural ablation that isolates individual ingredients of the full bundle; the claims in this section stand on the evidence presented herein and do not require access to that report.

\subsubsection{Inference-Time Validation (100-Question Ablation)}
\label{sec:inference_ablation}

As a complementary evaluation, we tested the same Gemma-2-9B model with identical LoRA adapter weights on 100 domain-specific questions (20 per domain) under two conditions: with CRMA injected and without. This is an inference-time ablation that measures whether CRMA is the mechanism enabling access to sequentially trained knowledge, distinct from the training ablation above which measures whether CRMA prevents forgetting during training.

\paragraph{What ``with/without CRMA'' means at inference.} CRMA is a forward-pass module; ``without CRMA'' means the module is bypassed at inference while the per-task LoRA weights are left unchanged, so the signal flows through the base model and the LoRA delta but skips the CRMA adapter layer. ``With CRMA'' routes the same signal through the CRMA module as well. The 60-point accuracy gap on the same weights therefore shows that the sequentially trained knowledge is accessed via the CRMA-mediated pathway, not via the LoRA delta alone; removing CRMA at inference falls back onto base-model plus raw LoRA capability, which is substantially worse on these domains.

\begin{table}[t]
\centering
\caption{Inference-time ablation on Gemma-2-9B: domain QA with and without CRMA injected. Same model, same adapter weights, same held-out 100-question panel (20 per domain)---only CRMA injection differs. Totals: 98/100 (Wilson 95\% CI $[93.0\%, 99.5\%]$) with CRMA vs 38/100 ($[29.0\%, 47.8\%]$) without. The Wilson intervals are disjoint. Condition is the only toggled variable per paired question. Per-item paired logs are available; semantic hand-regrade of the same 100-question panel (paired by question ID) gives discordant counts $b=40$ (CRMA correct, without-CRMA wrong) and $c=5$ (CRMA wrong, without-CRMA correct), with exact McNemar two-sided $p \approx 8 \times 10^{-8}$. The auto-scored marginals (98 with CRMA / 38 without) are inflated relative to the semantic hand-regrade marginals (88 / 53), but the paired effect size ($b-c=35$ in CRMA's favor) and the McNemar significance survive the stricter scoring---the ablation effect is robust to scorer choice.}
\label{tab:crma_ablation}
\begin{tabular}{lrrl}
\toprule
Domain & Without CRMA & With CRMA & Notes \\
\midrule
Medical (D1) & 5/20 & 20/20 & Genes, inheritance, treatments \\
Enterprise (D2) & 12/20 & 19/20 & Product-specific knowledge \\
Finance (D3) & 8/20 & 20/20 & Tax limits, investment rules \\
Military (D4) & 3/20 & 19/20 & Weapons specs, TCCC, logistics \\
Real Estate (D5) & 10/20 & 20/20 & Tax code, legal procedures \\
\midrule
\textbf{Total} & \textbf{38/100} & \textbf{98/100} & \\
\bottomrule
\end{tabular}
\end{table}

\begin{figure}[t]
\centering
\includegraphics[width=\textwidth]{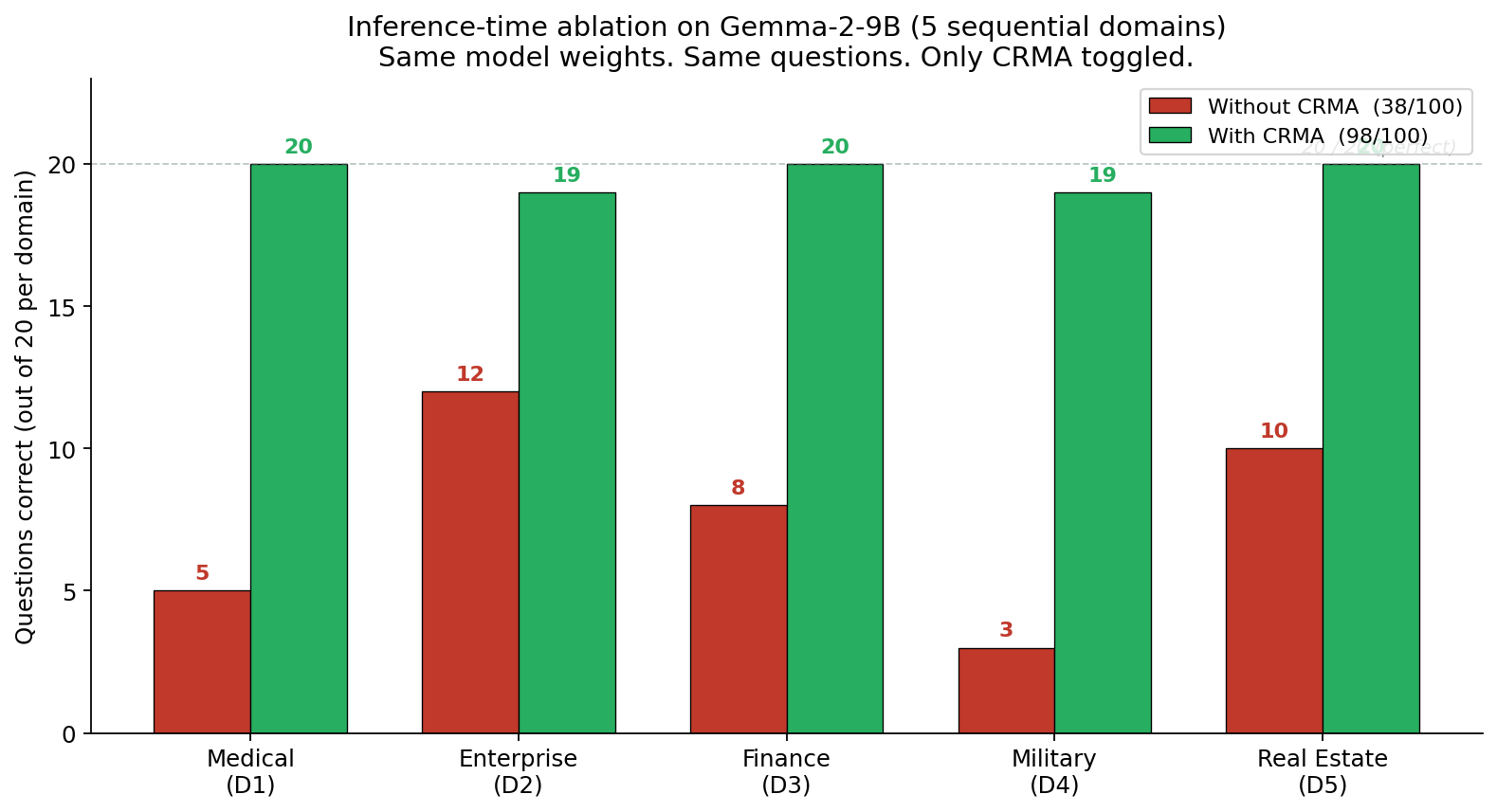}
\caption{Inference-time ablation on Gemma-2-9B across all five sequential domains. Same model weights, same 100 held-out questions; only the CRMA injection is toggled. Numerical values in Table~\ref{tab:crma_ablation}.}
\label{fig:inference_ablation}
\end{figure}

Without CRMA, the model falls back to base-model plus raw-LoRA capability, which performs substantially worse on these domains. With CRMA injected, the same model produces answers matching training data across all five domains. The two persistent errors with CRMA on (enterprise storage thresholds, military BDA terminology) are on questions whose specific facts did not appear in the training data on manual inspection---plausible training-coverage gaps rather than forgetting, though this is a descriptive observation, not an isolated ablation. Combined with the training ablation (Section~\ref{sec:training_ablation}), the result shows CRMA both brings forgetting to near-zero during training and serves as the access mechanism for sequentially trained knowledge at inference. A note on the Mistral-7B QA score of 26/31 (84\%) in Table~\ref{tab:multi_model}: the 5 errors are question-level grading misses on a small reference set, not catastrophic forgetting in the BWT sense---the 3-seed BWT result on the same model is $-0.17\% \pm 0.17$, essentially zero.

\subsection{Replication on Contamination-Controlled Synthetic Domains}
\label{sec:uc_ashwin}

\emph{Motivation.} Section~\ref{sec:uc_cl} establishes near-zero forgetting on real-world domains across five models and four architecture families. Two questions remain that no real-world benchmark can fully close: whether holdout accuracy reflects retention of fine-tuned knowledge rather than recall of pretraining knowledge, and whether the result is consistent under a protocol run by a co-author who was not involved in the original development of the algorithm. This section addresses both. The protocol was led by co-author Shanmugasundaram, who was not involved in the original algorithm development, on contamination-controlled fictional domains; algorithm internals were not shared with the operator, so all training and evaluation was performed black-box against the same CRMA implementation used in Section~\ref{sec:uc_cl}.

\emph{Experiment.} TinyLlama-1.1B-Chat-v1.0 was sequentially fine-tuned on three domains drawn from invented entity sets: Zephyria Medical (invented diseases, drugs, and protocols), Korinth Dynamics (invented company, products, employees, and financials), and Valtorian Republic (invented legal system, statutes, and procedures). All entity names and reference facts were authored to lie outside any plausible pretraining distribution; a canary-token audit on the base model returned 0\% recall across all three domains, confirming the holdout questions cannot be answered from pretraining alone. The training protocol mirrors Section~\ref{sec:training_ablation} at a smaller scale: QLoRA, 4-bit quantization, rank 64, learning rate $2 \times 10^{-4}$, batch size 2, 3 epochs per phase, single seed (s=42), single ordering (A $\to$ B $\to$ C). A matched plain-LoRA sequential fine-tune was run on identical data and identical hyperparameters, with only the adapter strategy differing, to verify that the protocol is sensitive to forgetting under the chosen training budget. Each domain comprised 2{,}700 training examples, 300 holdout, and 50 canary strings; evaluation reports held-out exact-match (EM) and held-out negative log-likelihood (NLL).

Table~\ref{tab:uc_ashwin} reports EM and NLL on the medical (oldest) holdout after each phase, together with backward-transfer aggregates across the three-phase sequence. The matched plain-LoRA control forgets in the expected direction: medical EM falls from 74\% (Wilson 95\% CI $[68.8\%, 78.6\%]$ on $n=300$) after phase A to 62\% ($[56.4\%, 67.3\%]$) after phase C, and medical NLL increases monotonically from 1.372 to 1.583 (single seed; NLL CIs require replication). Backward transfer for plain LoRA is $+0.182$ NLL and $-0.090$ EM, confirming the protocol is sensitive enough to detect catastrophic forgetting under this training budget. Under CRMA, on the same data and the same hyperparameters, medical-holdout EM \emph{rises} from 74\% ($[68.8\%, 78.6\%]$) to 78\% ($[73.0\%, 82.3\%]$) across phases, and medical-holdout NLL falls monotonically from 1.833 to 1.716 (single seed; NLL CIs require replication). Backward transfer for CRMA is $-0.110$ NLL and $+0.020$ EM. Both metrics indicate \emph{negative} forgetting on the prior task as later tasks were added---earlier-domain holdout performance improved as the sequence progressed, on the matched protocol where plain-LoRA degraded.

\begin{table}[t]
\centering
\caption{Co-author-led replication on TinyLlama-1.1B with three contamination-controlled fictional domains (Zephyria Medical $\to$ Korinth Dynamics $\to$ Valtorian Republic). Single seed (s=42), single ordering. Medical holdout reported per phase to show retention of the oldest task; aggregate BWT (NLL and EM) reported across the three-phase sequence. BWT is averaged across all post-baseline phases ($B$ and $C$) per the Lopez-Paz convention; the displayed NLL columns are after $A$ and after $C$ only, so the raw $C$-minus-$A$ subtraction (e.g., $1.716 - 1.833 = -0.117$ for CRMA) differs slightly from the multi-phase aggregate BWT shown ($-0.110$) because the latter incorporates the intermediate phase-$B$ holdout. Plain LoRA is the matched control on identical data and hyperparameters. Protocol led by co-author Shanmugasundaram, who was not involved in the original algorithm development.}
\label{tab:uc_ashwin}
\begin{tabular}{lccccc}
\toprule
 & \multicolumn{2}{c}{Medical EM (\%)} & \multicolumn{2}{c}{Medical NLL} & BWT \\
\cmidrule(lr){2-3} \cmidrule(lr){4-5}
Method & After A & After C & After A & After C & (NLL / EM) \\
\midrule
Plain LoRA  & 74 & 62 & 1.372 & 1.583 & $+0.182$ / $-0.090$ \\
\textbf{CRMA} & \textbf{74} & \textbf{78} & \textbf{1.833} & \textbf{1.716} & $\mathbf{-0.110}$ / $\mathbf{+0.020}$ \\
\bottomrule
\end{tabular}
\end{table}

\emph{Takeaway.} The cross-phase NLL stability finding of Section~\ref{sec:uc_cl} is consistent with this single-seed measurement at 1.1B scale on contamination-controlled fictional data, and the matched plain-LoRA control on identical data forgets in the expected direction. Two further observations beyond the headline retention claim. First, on this protocol CRMA exhibits negative backward transfer ($-0.110$ NLL, $+0.020$ EM): earlier-domain holdout improves as later phases are added rather than degrades. We note that CRMA's absolute medical-holdout NLL after phase A (1.833) is 25.1\% higher than plain-LoRA's clean single-phase fit (1.372): the spectral bound limits per-phase fit on a single domain in exchange for non-expansive backbone behavior across phases. The cross-phase improvement to 1.716 closes the gap to plain-LoRA's degraded after-phase-C NLL (1.583) within 8.4\%, but the absolute fit gap relative to plain-LoRA's clean baseline (1.372) remains $\sim$25\%. The headline claim is the cross-phase stability, not the per-phase fit advantage. Second, this is the first matched plain-LoRA versus CRMA comparison on identical data and hyperparameters reported in this paper---a partial response to the absent-baselines limitation discussed in Section~\ref{sec:discussion}. The result is reported at single seed and single ordering; a multi-seed, multi-ordering replication is left for follow-up work, and a 7B-scale replication is in progress. Prior CL methods that achieve positive backward transfer typically rely on replay buffers (e.g., GEM/A-GEM~\cite{lopezpaz2017gem}) or distillation (LwF~\cite{li2018lwf}); our setup achieves it through modular adapter composition on a spectrally-bounded backbone, with no replay buffer and no per-task memory growth beyond the adapter itself.

\subsection{Cross-Domain Transfer Probes}
\label{sec:uc_transfer}

\emph{Motivation.} The replication above measures whether prior-task knowledge is \emph{retained} across sequential stages. A separate question is what, if anything, sequential fine-tuning causes to \emph{transfer} across domains. This sub-study probes cross-domain transfer directly, on a different domain sequence and at 7B scale, under the same black-box protocol led by co-author Shanmugasundaram.

\emph{Experiment.} We evaluated cross-domain effects after each of three sequential SFT stages on Mistral-7B (D1: NL\,$\to$\,JSON manipulation function calls, $n = 887$; D2: R2R-style indoor navigation prose, $n = 1500$; D3: NL\,$\to$\,Python task-planning pipelines, $n = 985$; each domain dataset $< 2$\,MB). A fixed battery of nine probes was administered after every stage, comprising four in-distribution D1 probes and five held-out probes designed to test cross-domain lexical grounding and compositional behavior; outputs were scored on a $0$\,/\,$0.5$\,/\,$1$ rubric and corroborated against the source corpora via exact-token retrieval. We observe three qualitatively distinct cross-domain outcomes.

\begin{enumerate}[leftmargin=*,itemsep=2pt]
\item \textbf{Lexical/semantic primitives transfer cleanly.} After Stage 2, the model produces a correctly signed negative-$z$ output for the verb ``lower'' (P7, score $1.0$), despite this verb occurring in zero of the $887$ D1 training instructions (verified by exact-token retrieval over the corpus). Analogous sign-flip behavior is observed for ``opposite direction'' (P6, partial credit; magnitude error attributable to upstream annotation noise in the source D1 dataset).
\item \textbf{Structural-compositional transfer is type-specific.} After Stage 3, the model decomposes high-level concepts (``return to home position'') into multi-call outputs (P9), but does not produce multi-call outputs for explicitly enumerative compositions of the form ``first X, then Y'' (P8)---suggesting that goal-decomposition and enumerative-composition behaviors occupy separable, independently transferable representations.
\item \textbf{No measurable catastrophic forgetting.} All four in-distribution D1 probes retain perfect scores after both subsequent training stages, indicating that the cross-domain effects above are gained without sacrificing source-task retention.
\end{enumerate}

\emph{Takeaway.} On this second domain sequence at 7B scale, sequential fine-tuning yields measurable cross-domain transfer of lexical-semantic primitives and of goal-decomposition compositionality, while in-distribution source-task performance is fully retained---consistent with the no-forgetting result of the replication above and of Section~\ref{sec:uc_cl}. The probe battery is small ($n=9$) and single-seed; it is reported as a qualitative transfer observation rather than a quantitative claim.

\emph{A three-class transfer taxonomy.} The three outcomes above suggest a taxonomy for backward-transfer heterogeneity in continual fine-tuning of LLMs: (i) \textbf{lexical/semantic transfer}---primitive token-level or sign-flip behavior that survives intact across stages; (ii) \textbf{goal-decomposition transfer}---high-level concept decomposition that partially transfers; (iii) \textbf{enumerative-composition transfer}---ordered multi-step composition that does not transfer. This three-class split is consistent with concurrent work by Meng (2026)~\cite{meng2026beyondretention}, who reports a coarser two-class dichotomy (positive BWT on unstructured tasks, severe negative on structured) under experience-replay; our three-class observation refines the ``structured'' pole into goal-decomposition versus enumerative-composition.

\section{Related Work}
\label{sec:related}

\subsection{Parameter-Efficient Fine-Tuning and Adapter Architectures}

LoRA \cite{hu2022lora} decomposes weight updates into low-rank matrices $\Delta W = BA$, reducing trainable parameters by 100--1000$\times$. QLoRA \cite{dettmers2023qlora} extends this to 4-bit quantized base models. PiSSA \cite{meng2024pissa} initializes LoRA matrices from the principal SVD of pretrained weights for faster convergence. LoRA+ \cite{hayou2024lora} applies asymmetric learning rates to the A and B matrices based on their different functional roles. None of these methods constrain the spectral properties of the resulting transformations during training.

Other adapter approaches (adapter layers, Houlsby et al.~\cite{houlsby2019parameter}; prefix tuning, Li and Liang~\cite{li2021prefix}; prompt tuning, Lester et al.~\cite{lester2021power}) reduce trainable parameters without bounding spectral properties.

\subsection{Spectral Methods in Neural Networks}

Spectral normalization \cite{miyato2018spectral} constrains GAN discriminator weights to Lipschitz constant at most 1 via power iteration. The difference from our work is enforcement: spectral normalization divides the weight matrix by its estimated spectral norm post-hoc, an approximation whose tightness depends on iteration count. Our method enforces the bound through the parameterization itself, producing smooth gradients throughout training.

Spectral Adapter~\cite{zhang2024spectral} (Zhang and Pilanci, NeurIPS 2024) fine-tunes in the spectral domain via singular value decompositions. It targets parameter efficiency rather than stability and is complementary to our approach.

Dohare et al.~\cite{dohare2024plasticity} establish in \emph{Nature} that deep continual learning systems suffer progressive plasticity loss, with spectral collapse in weight matrices as a root cause. This structural diagnosis motivated our approach: enforce spectral bounds as an architectural invariant rather than allowing spectral properties to drift unconstrained.

\subsection{Spectral Methods for Continual Learning}

Lewandowski et al.~\cite{lewandowski2025learning} penalize deviation of the maximum singular value from unity using a tunable coefficient. This soft penalty can be overridden by sufficiently large task gradients. Our method differs in enforcement (exact structural constraint via doubly-stochastic construction versus approximate regularization) and in scale: Lewandowski et al.\ demonstrate on ResNet-18 and ViT-B ($\sim$86M parameters), while we test on LLMs at 1.1B--9.2B parameters with LoRA adapters. No head-to-head empirical comparison has been run; we compare enforcement mechanisms only.

CaSpeR \cite{frascaroli2023casper} regularizes singular values of latent representations. OPLoRA \cite{xiong2025oplora} projects adapter updates into subspaces orthogonal to prior task representations. Both address different failure modes: CaSpeR preserves representation geometry, OPLoRA constrains update direction, and our method bounds update magnitude. The approaches are not mutually exclusive.

\subsection{Doubly-Stochastic Matrices and Spectral Constraints}

Doubly-stochastic matrices have spectral norm bounded by 1 \cite{horn1990matrix}. Sinkhorn normalization \cite{sinkhorn1964relationship} projects arbitrary nonnegative matrices into this class and has been used in optimal transport \cite{cuturi2013sinkhorn} and differentiable sorting \cite{mena2018learning}.

Sinkhorn projection has prior use inside neural architectures: Sparse Sinkhorn Attention \cite{tay2020sparsesinkhorn} uses a learned sorting network to produce a doubly-stochastic re-ordering matrix for efficient attention, and Sinkformers \cite{sander2022sinkformers} replace the softmax in self-attention with a Sinkhorn projection to give attention matrices a doubly-stochastic structure with a Wasserstein-gradient-flow interpretation. We apply the same projection in a structurally different place---not to an attention map between tokens, but inside a residual adapter substrate under per-task LoRA---where the bound serves continual-learning retention rather than attention sparsity or permutation modeling. The mathematical operation overlaps with prior work; the placement (adapter substrate, not attention) and purpose (sequential-fine-tuning stability, not attention regularization) are the contribution.

\subsection{Continual Learning Methods}

Progressive Neural Networks \cite{rusu2016progressive} add new capacity per task with lateral connections, preventing forgetting by freezing prior columns. PackNet \cite{mallya2018packnet} prunes and freezes subnetworks per task, providing hard isolation. HAT \cite{serra2018overcoming} learns task-specific hard attention masks. These structural approaches offer strong forgetting guarantees but scale poorly (progressive nets) or require task identity at inference (PackNet, HAT).

In the LoRA-specific CL space, O-LoRA \cite{wang2024olora} maintains orthogonality between successive LoRA adapters to minimize interference. InfLoRA \cite{liang2024inflora} decomposes LoRA updates into interference-free subspaces. Both are directly comparable to our approach in goal, and we acknowledge explicitly that we do \emph{not} run published O-LoRA or InfLoRA on our benchmark (see \S\ref{sec:discussion}, ``Absent baselines''). Our v2 experiments implemented an O-LoRA-style orthogonality regularizer combined with a gradient-projection term of our own; the combination mechanically contradicted itself in our implementation. This is a comment about our v2 attempt, not about published O-LoRA.

In the modular-adapter space, LoRAHub~\cite{huang2024lorahub} composes task-specific LoRA adapters for cross-task generalization; mixture-of-LoRA approaches such as X-LoRA~\cite{buehler2024xlora}, LoRAMoE~\cite{dou2024loramoe}, and MoLE~\cite{wu2024mole} route queries among per-task adapters via learned gating. SAPT~\cite{zhao2024sapt} unifies prompt-tuning and adapter selection within a shared attention framework. AdapterFusion~\cite{pfeiffer2021adapterfusion} is the canonical non-destructive task-composition baseline in the pre-LoRA adapter-layer setting. Progressive Networks~\cite{rusu2016progressive} and PackNet~\cite{mallya2018packnet} (cited above in the CL section) similarly isolate per-task capacity. The CRMA-specific contribution is not the idea of per-task adapters, which is prior art, but the addition of a spectrally bounded continuously-trained backbone on top of which modular per-task adapters can compose---enabling the shared backbone to keep learning without drifting in a way that overwrites prior adapters.

\paragraph{Not directly comparable: multi-task model merging.} A separate line of work composes weights from multiple already-fine-tuned models into a single merged checkpoint: TIES-Merging~\cite{yadav2023ties}, DARE~\cite{yu2024dare}, and AdaMerging~\cite{yang2024adamerging}. These methods solve a different problem (merging $N$ independently-trained checkpoints into one) and assume all task data is available concurrently. CRMA solves sequential continual fine-tuning with no requirement to revisit prior task data, and never collapses adapters into the backbone. We note these methods to disclaim that CRMA is \emph{not} a multi-task merge and is not directly comparable to them on merging benchmarks.

A comprehensive recent survey of continual learning across families of methods is provided by Wang et al.\ (2024)~\cite{wang2024clsurvey}. EWC \cite{kirkpatrick2017overcoming} penalizes changes to important parameters via the Fisher information matrix. Learning without Forgetting (LwF)~\cite{li2018lwf} is the canonical distillation-based CL baseline. GEM and A-GEM~\cite{lopezpaz2017gem} use gradient episodic memory with projection. Biderman et al.~\cite{biderman2024lora} show empirically that LoRA itself tends to forget less than full fine-tuning, framing LoRA as a ``less-plastic, less-forgetful'' alternative. van de Ven and Tolias~\cite{vandeven2019three} distinguish three CL scenarios (task-incremental, domain-incremental, class-incremental). Our setup sits in the task-incremental regime with task-ID recovered at inference by the router---structurally the most tractable of the three, and the only one our result speaks to. In domain-incremental or class-incremental settings the router has nothing to key off, the modular composition loses its substrate, and CRMA's backbone stability buys nothing without a different routing mechanism. The TRACE benchmark \cite{wang2023trace} shows the scale of catastrophic forgetting on LLMs: LLaMA-2 13B accuracy on GSM8K drops from 28.8\% to 2\% under naive sequential fine-tuning. French~\cite{french1999catastrophic} characterized catastrophic interference. Gouk et al.~\cite{gouk2021regularisation} enforce Lipschitz continuity through spectral constraints on weight matrices, related to but distinct from our doubly-stochastic construction.

\section{Discussion and Limitations}
\label{sec:discussion}

The architecture has been deployed in a production fine-tuning system; deployment-side details are outside the scope of this paper.

\subsection{Limitations}

The limitations below are well-scoped, pre-registerable, and gated on external funding rather than on architectural uncertainty. We frame each as the open question it leaves for follow-up work.

\paragraph{Does the bound transfer beyond decoder-only transformers?} We have tested on decoder-only transformers spanning 1.1B--9.2B parameters. CNNs, ViTs, encoder-decoder models, and RL policies have not been tested; extending the spectral-bound construction to these settings would establish the architecture as a general continual-learning primitive rather than an LLM-specific one.

\paragraph{Can fine-grained QA accuracy differences be resolved?} The domain QA evaluation (6--20 questions per domain) detects catastrophic forgetting but cannot measure fine-grained accuracy differences. The controlled training ablation (Section~\ref{sec:training_ablation}) uses NLL holdout loss, which provides continuous-valued measurement and partially mitigates this resolution gap; a larger reference panel would close it.

\paragraph{Which component carries the GSM8K loss?} The corrected 10.2~pp degradation on GSM8K is a non-trivial loss on math reasoning. The spectral bound is on the CRMA mixing matrix $M$ and the LoRA deltas are free to move, making the LoRA-side hypothesis plausible, but a clean isolation has not been run. The minimal controlled experiment---a one-seed comparison of LoRA-only sequential fine-tuning vs LoRA+CRMA sequential fine-tuning on the same 5-domain protocol, reporting GSM8K delta for both---would resolve attribution. Until that result is in, deployments on any math-adjacent capability should treat the 10.2~pp drop as a known limitation of the current bundle.

\paragraph{Is the MMLU shift redistribution or uniform loss?} MMLU showed 6.46~pp degradation. Sub-category analysis shows categories related to the trained domains moved up and unrelated categories moved down, consistent with capability redistribution; a wider sub-category sweep is needed to rule out uniform loss.

\paragraph{Does centroid routing hold under sub-domain overlap?} The five-domain experiments use maximally distinct domains---the easy regime for a centroid classifier. We tested overlapping domains on Saul-7B across three legal sub-domains (18/18 on first-author evaluation; Wilson 95\% CI $[82\%, 100\%]$, small sample), but this was a QA panel, not a routing-confusion probe. Every forgetting-prevention number in this paper is conditioned on correct routing, so a router collapse on close sub-domains would manifest as a forgetting event even when no forgetting has occurred. A planned 50--100 question boundary-query set across Saul-7B's three legal sub-domains, with the centroid classifier's confusion matrix and top-1/top-2 margin distribution reported, would resolve this conditional. Inference-only; no training cost.

\paragraph{Can the structural guarantee be extended to the full pipeline?} The exact, structural part of the result is the doubly-stochastic constraint on $M$ (Proposition~\ref{prop:spectral_bound}, verified at every step in Section~\ref{sec:spectral_stability}). The forgetting-prevention behavior on the full pipeline is empirically verified across 5 models, 4 architecture families, 3 seeds, and four complementary evaluation methodologies, but is not formally proved. The FROZEN-vs-MODULAR comparison in Section~\ref{sec:training_ablation} also shows that the bound on $M$ alone is not what carries the forgetting-prevention effect in our ablation---the modular per-task adapter architecture already produces zero drift by construction, and CRMA's role is to let the shared backbone keep training while preserving that guarantee. A formal proof for the full pipeline is open.

\paragraph{How does the method compare against published CL baselines?} Our controlled ablation compares NAIVE, FROZEN, and MODULAR on the same data across 3 seeds. These are all self-comparisons. We have not run any published continual-learning method on our benchmark. Specifically absent:
\begin{itemize}[leftmargin=*]
\item \textbf{O-LoRA} \cite{wang2024olora} -- orthogonal LoRA for continual fine-tuning. Our v2 experiments implemented an O-LoRA-\emph{style} orthogonality regularizer that we combined with a gradient-projection term of our own design; the combination was mechanically incoherent in our implementation. This is not a statement about the published method. We have not run the published method on our benchmark.
\item \textbf{InfLoRA} \cite{liang2024inflora} -- interference-free LoRA for CL. Not run on our benchmark.
\item \textbf{Lewandowski et al.\ (ICLR 2025) spectral regularization} \cite{lewandowski2025learning} -- most methodologically similar prior spectral approach. Not run on our benchmark.
\item \textbf{EWC}~\cite{kirkpatrick2017overcoming}, \textbf{LwF} (Learning without Forgetting), \textbf{GEM/A-GEM}, and experience replay. None run as head-to-head comparisons on our protocol.
\item \textbf{LoRAHub}~\cite{huang2024lorahub}, \textbf{X-LoRA}~\cite{buehler2024xlora}, \textbf{LoRAMoE}~\cite{dou2024loramoe}, and \textbf{AdapterFusion}~\cite{pfeiffer2021adapterfusion} in the modular-adapter-routing / task-composition space. Not compared directly.
\item \textbf{Standard CL benchmarks} such as TRACE~\cite{wang2023trace}, CLINC150, and FewRel-CL. Cited as motivation but not run.
\end{itemize}
This is the single largest open question in the paper. The effect sizes in our controlled ablation ($+43\%$ NAIVE forgetting vs $-0.17\%$ MODULAR drift) are large enough that we believe the result is real, but a direct comparison against published CL methods on a standard benchmark would resolve it. A 3-seed reproducible evaluation harness is planned for release alongside the companion technical report \cite{C1} so such comparisons can be performed by others.

\paragraph{Does QA evaluation hold under blinded multi-rater scoring?} All QA scoring in Tables~\ref{tab:multi_model} and \ref{tab:crma_ablation} was performed by the first author against pre-written reference answers. There is no second rater, no inter-rater reliability statistic, no condition blinding, and no independent domain expert. On Saul-7B legal questions specifically, the first author is not a legal practitioner. A blinded two-rater protocol on the existing question sets, with inter-rater reliability statistics, would resolve this; it is labour rather than compute.

\paragraph{Are the protocols pre-registerable end-to-end?} Evaluation protocol, seed values, lm-evaluation-harness benchmark list, and question sets were finalized before the final 3-seed MODULAR/NAIVE/FROZEN run, but earlier iterations (v2--v7, Table~\ref{tab:ablation_history}) explored many different protocols over 80+ GPU runs. The reported effect sizes are far larger than any multiple-comparisons correction would require, but a fully pre-registered re-run on a community-standard benchmark (TRACE~\cite{wang2023trace}, CLINC150, FewRel-CL) would close the researcher-degrees-of-freedom gap.

\subsection{Practitioner Note: Fine-Tuning vs Retrieval}
\label{sec:rag_vs_ft}

A natural question for practitioners deploying continual fine-tuning is when fine-tuning is the right tool relative to retrieval-augmented generation. We ran a decision-grade FT-vs-RAG quality gate on three public Obsidian-style markdown vaults; the full protocol, kill-threshold methodology, per-vault results (RAG-win rates 59.0\%/40.0\%/56.0\% below the 60\% kill threshold; FT-win 83.3\% on inferential questions), caveats, and pinned-SHA artifact list are reported in Appendix~\ref{sec:os1_rag_vs_ft}. The takeaway is operational: RAG dominates extractive lookup and style mimicry on small-base-model fine-tunes, fine-tuning dominates inferential synthesis, and the two are different tools for different jobs rather than substitutes.

\subsection{Future Work}
\label{sec:future_work}

The limitations enumerated above are the agenda for the next phase of this work, ordered by the marginal scientific value each item would add to the claims made in this paper.

\paragraph{Head-to-head comparison against published continual-learning methods.} A 3-seed comparison against published O-LoRA~\cite{wang2024olora}, InfLoRA~\cite{liang2024inflora}, EWC~\cite{kirkpatrick2017overcoming}, LwF~\cite{li2018lwf}, and Lewandowski et al.~\cite{lewandowski2025learning} on the same Mistral-7B 5-domain protocol used in Section~\ref{sec:training_ablation}. Effect-size sufficiency is established (modular drift is two orders of magnitude smaller than the within-group standard deviation in the 3-seed run), so the comparison need only resolve which methods sit inside vs.\ outside the modular regime; sample sizes per cell can be small. This is the comparison most likely to elevate the result from architecturally controlled to method-comparable.

\paragraph{Multi-seed and multi-ordering replication of the synthetic-domain study at 7B.} Section~\ref{sec:uc_ashwin} reports a co-author-led replication at 1.1B with single seed and single ordering; a 3-seed Mistral-7B replication on the same contamination-controlled fictional domains, at the matched plain-LoRA-vs-CRMA design, would carry the result from anecdotal to inferential. The ordering robustness check requires running each of the $3! = 6$ permutations of (A, B, C) at one seed and reporting the variance across orderings.

\paragraph{Routing stress-test under sub-domain overlap.} A 50--100 question boundary-query set spanning Saul-7B's three legal sub-domains, with the centroid classifier's confusion matrix and top-1/top-2 margin distribution reported, would resolve the conditional in Section~\ref{sec:setup} that every forgetting-prevention number depends on routing holding. Inference-only; no training cost.

\paragraph{Multi-seed replication and projector $\lambda$-sweep of the subspace-overlap study.} Appendix~\ref{sec:uc_vinay} reports the projector study at single seed per cell and a discrete two-point $\lambda$ sweep ($\lambda \in \{0, 0.1\}$). A 3-seed Mistral-7B replication on the same Dogs $\to$ \{Cats, Dogs Advice, Cars\} continuations would carry the result from anecdotal to inferential. A 5-point $\lambda$ sweep ($\lambda \in \{0, 0.025, 0.05, 0.1, 0.2\}$) at one seed per cell would resolve whether a sub-$0.1$ value Pareto-improves on hard projection at low overlap without giving up hard's correlated-pair win.

\paragraph{GSM8K mechanism isolation.} A one-seed 5-domain comparison of LoRA-only vs.\ LoRA+CRMA reporting GSM8K delta for both, isolating whether the 10.2\,pp loss is carried by unconstrained LoRA components or by the CRMA backbone. This is the smallest follow-up by GPU footprint and the largest by claim-precision.

\paragraph{Blinded multi-rater QA evaluation by domain experts.} The QA scoring in Tables~\ref{tab:multi_model} and \ref{tab:crma_ablation} is first-author against pre-written reference answers; the legal evaluation in particular needs a practitioner. A blinded two-rater protocol (or three-rater with majority vote) on the existing question sets, with inter-rater reliability statistics, is labour rather than compute.

\paragraph{Domain-incremental and class-incremental routing.} Our setup sits in the task-incremental regime where the router can recover task identity from the query. Domain-incremental and class-incremental settings (van de Ven and Tolias~\cite{vandeven2019three}) require a routing mechanism the present architecture does not provide.

\paragraph{Architectural extension beyond decoder-only transformers.} Extending the spectral-bound construction to convolutional networks, vision transformers, encoder-decoder models, and reinforcement-learning policies would establish the architecture as a general continual-learning primitive rather than an LLM-specific one.

\paragraph{Pre-registered TRACE-benchmark evaluation.} Running the modular CRMA architecture on TRACE~\cite{wang2023trace}, CLINC150, and FewRel-CL with a pre-registered protocol and the same head-to-head method comparison as the first item above. Establishes the result on a community-standard benchmark.

\paragraph{Operational characterisation.} Adapter file size, router inference overhead, adapter-swap cost, peak GPU memory with $T$ adapters resident, and throughput at batch 1 and batch 8, measured against the base-model baseline. These are deployment-evaluation requirements rather than scientific claims and require no novel infrastructure.

\section{Ethics and Broader Impact}
\label{sec:ethics}

Bringing catastrophic forgetting toward zero in sequential fine-tuning would, if validated at deployment scale, enable safer model updates in regulated industries including medical devices and autonomous vehicles. Any technology that makes AI system modification easier also lowers the barrier to deploying AI in high-stakes settings where insufficient validation could cause harm. The modular CRMA architecture is a necessary but not sufficient condition for safe deployment: organizations must still perform domain-specific validation, regulatory review, and risk assessment before deploying updated models in safety-critical applications. The results in this paper are a controlled-benchmark finding, not a deployment-ready guarantee.

\section{Acknowledgments}
\label{sec:acknowledgments}

\paragraph{Conflict of interest.} All authors are affiliated with ModelBrew AI, which holds the pending patent on the method (US provisional, filed February 2026).

\paragraph{Compute and software.} Compute resources provided by Modal, Inc. The authors thank the open-source communities behind PyTorch, Hugging Face Transformers, and lm-evaluation-harness.

\section{Conclusion}
\label{sec:conclusion}

The paper is organised around a single claim---that the modular CRMA architecture brings catastrophic forgetting in continual fine-tuning of large language models to within measurement noise on the models and protocols we tested---and the four complementary evaluations that establish it.

\paragraph{The architecture.} CRMA pairs per-task LoRA adapters with a shared backbone whose internal mixing matrix $M$ is doubly-stochastic at every forward pass via Sinkhorn normalization, so $\|M\|_2 \leq 1$ holds by construction (Proposition~\ref{prop:spectral_bound}). We validate this directly: 867 logged training steps across 5 sequential domains on Gemma-2-9B confirm $\|M\|_2 = 1.0$ within float32 precision at every step. The mixing constraint holds exactly within numerical (float32) precision and requires no clipping, re-normalization, or per-step projection. The role of the constraint is architectural: it gives the per-task adapters a stable substrate to compose against, so the gradient updates of task $N$ cannot rewrite the representation that prior tasks depend on.

\paragraph{The result --- near-zero catastrophic forgetting in continual learning.} Four complementary evaluations on Mistral-7B and Gemma-2-9B agree on the central claim. The headline numbers (Mistral-7B 3-seed loss-relative drift of $-0.17\% \pm 0.17$ vs $+42.96\% \pm 5.5$ NAIVE; Gemma-2-9B inference-time ablation 98/100 vs 38/100; standard benchmarks HellaSwag/ARC-C/TruthfulQA/WinoGrande within 1.7--3.7\,pp, MMLU $-6.5$\,pp, GSM8K $-10.2$\,pp; 5 models across 4 architecture families from 1.1B to 9.2B parameters) are reported in detail in Section~\ref{sec:uc_cl}. Three independent experimental setups all show positive backward transfer: the Mistral-7B controlled training ablation (Tables~\ref{tab:v81_7b} and~\ref{tab:multiseed}, MODULAR drift averaging $-0.1\%$ across 3 seeds), the TinyLlama contamination-controlled replication (Table~\ref{tab:uc_ashwin}, $-0.110$ NLL / $+0.020$ EM), and the Mistral-7B cross-domain probes at 7B scale (D1-probe survival across three sequential stages). The result is conditional on inference-time routing accuracy.

\paragraph{Co-author-led replication.} Section~\ref{sec:uc_ashwin} (Shanmugasundaram) reports a black-box replication on TinyLlama-1.1B over three contamination-controlled fictional domains (Zephyria Medical $\to$ Korinth Dynamics $\to$ Valtorian Republic). A canary-token audit confirms the base model cannot answer the holdout questions from pretraining (0\% recall). On this protocol CRMA preserves medical-holdout EM (74\% $\to$ 78\% across phases) and exhibits negative backward transfer ($-0.110$ NLL, $+0.020$ EM); the matched plain-LoRA control on identical data and identical hyperparameters forgets in the expected direction ($+0.182$ NLL, $-0.090$ EM). Single seed, single ordering; a multi-seed and 7B-scale replication is in progress. A supplementary within-team study (Naidu) on subspace overlap as a routing signal at 1.1B and 7B is reported in Appendix~\ref{sec:uc_vinay}.

\paragraph{Scope and caveats.} The exact, structural part of the result is the doubly-stochastic constraint on the CRMA mixing matrix $M$ (Proposition~\ref{prop:spectral_bound}, verified at every step). The forgetting-prevention behavior on the full pipeline is empirically established across 5 models, 4 architecture families, 3 seeds, and four complementary evaluations, but is not formally proved. Validation is limited to decoder-only transformers in the 1.1B--9.2B range. Head-to-head comparison with published continual-learning methods (O-LoRA, InfLoRA, Lewandowski et al.) on the same benchmark would strengthen the result and is left as future work. The forgetting-prevention result does not depend on a training-loop gradient claim, and the paper makes none.

\section*{Code and Availability}

The method described is patent-pending (US provisional, filed February 2026). A standalone model-agnostic library is under development. License inquiries and research collaboration requests should be directed to the corresponding author at the affiliation listed above.


\appendix

\section{Subspace Overlap as a Routing Signal:\texorpdfstring{\\}{ }A Within-Team Exploratory Study}
\label{sec:uc_vinay}

\emph{Motivation.} Section~\ref{sec:setup} flags same-vertical routing as a load-bearing untested case for the headline architecture: every forgetting-prevention number reported there is conditional on the centroid router picking the correct adapter at inference, and the routing mechanism has not been stress-tested under sub-domain overlap. This appendix study measures inter-task gradient subspace overlap directly---using the SVD basis extracted from the per-task adapter activations after training---and asks whether that measurement is exportable as a routing signal at task transition between adapter pairs of varying topical/format relation. The protocol was led by co-author Naidu, who was not involved in the original algorithm development; algorithm internals were not shared with the operator and all training and evaluation was performed black-box against the same CRMA implementation used in Section~\ref{sec:uc_cl}. This appendix is supplementary material; no claim in the body of the paper depends on its results.

The study sits in a deliberately different design point from the headline result. The experiments in Sections~\ref{sec:uc_cl} and \ref{sec:uc_ashwin} establish near-zero forgetting on the modular per-task adapter architecture without handcrafted continual-learning machinery. This appendix instead places CRMA's per-task adapter basis inside a hand-built CL stack that combines EWC anchoring~\cite{kirkpatrick2017overcoming}, 30\% example replay, knowledge distillation from the prior-task checkpoint, an O-LoRA-style orthogonality penalty~\cite{wang2024olora}, and gradient projection onto the prior task's subspace basis (in the spirit of GPM~\cite{saha2021gpm} and Adam-NSCL~\cite{wang2021adamnscl}). The question is not whether this stack outperforms the headline modular architecture---it is a different design point---but whether the per-task adapter basis is exportable as a CL primitive and whether the principal-angle overlap between two such bases predicts the operational outcome of using one as a projector for the next.

\emph{Experiment.} TinyLlama-1.1B and Mistral-7B-Instruct-v0.3 were trained on a Dogs Q\&A base task and continued onto three target tasks of varying structural relation: Cats Q\&A (related topic, same factual-Q\&A format---correlated), Dogs Advice (same topic, scenario-based-advice format---medium overlap), and Cars Q\&A (unrelated topic, same format---uncorrelated). At 7B scale, principal-angle overlap was measured between the per-task SVD bases extracted from independently-trained adapters (mean $\cos^2$ across instrumented layers). Two projector variants were compared: hard ($P = BB^\top$, $\lambda{=}0$) and damped ($P = B(B^\top B + \lambda^2 I)^{-1}B^\top$, $\lambda{=}0.1$). Configuration: QLoRA $r{=}32$, $\alpha{=}64$, 4-bit NF4 with double-quant and bf16 compute, learning rates $2 \times 10^{-4}$ (base task) and $1 \times 10^{-4}$ (continual phases), batch size 2 with gradient accumulation 4. Single seed per cell. Held-out scoring uses deterministic generation against reference answers (Jaccard threshold $\geq 0.3$ at 1.1B, $\geq 0.25$ at 7B). \emph{Methodological caveat:} the per-scale Jaccard threshold differs by scale, which is a researcher degree of freedom that mechanically inflates the absolute retention numbers at 7B relative to 1.1B. We were not able to re-score under a single common threshold from the available data. Absolute retention numbers in Tables~\ref{tab:uc_vinay_overlap} and \ref{tab:uc_vinay_outcomes} should therefore be read \emph{within-scale}, not compared across scales as if they were on a common metric. The within-scale claims (the relative ordering of pairs by overlap, and the hard-vs-damped winner direction at 7B) are unaffected by this confound.

Table~\ref{tab:uc_vinay_overlap} reports the measured overlap; Table~\ref{tab:uc_vinay_outcomes} reports retention and plasticity under hard projection across both scales. Two observations on this single-seed measurement, both stated as consistent-with rather than established. First, the measured CRMA-side basis overlap orders the three pairs in the same direction as a topic/format intuition, and on this measurement the format-overlap effect appears at least as large as the topic-overlap effect: Dogs $\leftrightarrow$ Cats (same format, related topic) shows mean overlap 0.612 versus Dogs $\leftrightarrow$ Dogs Advice (same topic, different format) 0.464, but the per-layer dispersion ($\pm 0.187$ and $\pm 0.221$ respectively---layer-to-layer dispersion within a single trained pair, not a between-pair confidence interval) overlaps substantially. Replication across multiple seeds is required to confirm the directionality of this effect. Second, on the single-seed retention numbers, the sign of the hard-vs-damped retention difference is consistent with flipping with overlap---damped projection leads by 8 points on the lowest-overlap pair (Cars), is competitive at medium overlap (Dogs Advice, $-4$), and trails by 4 points to hard projection at the highest overlap (Cats). The same direction appears on plasticity. With only $N=1$ per cell across three cells, the apparent pattern is suggestive rather than established; multi-seed replication is required to confirm the directionality. Importantly, the absolute retention numbers themselves are nearly flat across overlap at 7B (74--78\%, a 4-point spread), and at 7B Cars retention (78\%) is in fact higher than Cats retention (74\%): the overlap-prediction claim above is about the hard-vs-damped winner \emph{direction}, not about absolute retention monotonically tracking overlap. A reader who reads only the absolute retention numbers in Table~\ref{tab:uc_vinay_outcomes} will see a flat, slightly anti-monotone profile; the within-cell hard-vs-damped contrast is what carries the structural claim. Hard projection is the unregularized limit of a damped projector, and that limit is empirically the better choice on this single-seed measurement precisely in the high-overlap regime.

\begin{table}[t]
\centering
\caption{Measured principal-angle subspace overlap at 7B between independently-trained per-task adapters (mean $\cos^2$ across instrumented layers). On this single-seed measurement, Dogs $\leftrightarrow$ Cats (same factual-Q\&A format, related topic) shows higher mean overlap than Dogs $\leftrightarrow$ Dogs Advice (same topic, scenario-based format), consistent with format consistency contributing at least as much as topic overlap; multi-seed replication is required to confirm directionality. \emph{The reported $\pm$ is layer-to-layer dispersion within a single trained adapter pair, not a between-pair confidence interval; no significance test is implied.} Protocol led by co-author Naidu, who was not involved in the original algorithm development. The per-layer SVD rank, truncation rule, and instrumented-layer set used to extract the per-task adapter basis are detailed in the companion technical report \protect\cite{C1} (in preparation) and are available on request.}
\label{tab:uc_vinay_overlap}
\begin{tabular}{lcc}
\toprule
Pair (dogs $\leftrightarrow$ X) & LoRA-side overlap & CRMA-side overlap \\
\midrule
Cars (uncorrelated) & $0.339 \pm 0.092$ & $0.384 \pm 0.138$ \\
Dogs Advice (medium) & $0.366 \pm 0.100$ & $0.464 \pm 0.221$ \\
\textbf{Cats (correlated)} & $\mathbf{0.461 \pm 0.087}$ & $\mathbf{0.612 \pm 0.187}$ \\
\bottomrule
\end{tabular}
\end{table}

\begin{table}[t]
\centering
\caption{Hard projection ($\lambda{=}0$) outcomes by task type and scale. Retention is dogs-holdout accuracy after CL on the new task; forgetting is the absolute drop versus the dogs-only baseline (1.1B baseline: 46\%; 7B baseline: 86\%). The 1.1B Dogs-only baseline of 46\% reflects either the hard Jaccard threshold ($\geq 0.3$) on short-answer Q\&A or limited convergence at the 1.1B scale; the 1.1B 16\% Cars cell should not be read as evidence specifically against hard projection, as it could equally be a small-scale plasticity ceiling. The 1.1B Dogs $\to$ Dogs Advice cell is absent because Dogs Advice was run only at 7B; the 1.1B continuation set covers Cats and Cars only. At 7B the retention spread across all three continuations is only 4 points (74--78\%), and Cars retention (78\%) is in fact higher than Cats retention (74\%); the overlap-prediction claim is about the within-cell hard-vs-damped winner direction, not about absolute retention monotonically tracking overlap (see text). Single seed per cell; multi-seed replication is required to confirm directionality.}
\label{tab:uc_vinay_outcomes}
\begin{tabular}{llcccc}
\toprule
Scale & Continuation & Type & Base ret. & Forget & New-task acc. \\
\midrule
1.1B & Dogs $\to$ Cats & Correlated & 44\% & $-2$ & 56\% \\
1.1B & Dogs $\to$ Cars & Uncorrelated & 36\% & $-10$ & 16\% \\
\midrule
7B & Dogs $\to$ Dogs Advice & Same topic / new format & 76\% & $-10$ & 100\% \\
7B & Dogs $\to$ Cats & Correlated & 74\% & $-12$ & 92\% \\
7B & Dogs $\to$ Cars & Uncorrelated & 78\% & $-8$ & 72\% \\
\bottomrule
\end{tabular}
\end{table}

\emph{Takeaway.} Two paper-relevant observations on this single-seed measurement, both stated as suggestive rather than established. First, the per-task adapter basis is operationally usable outside the modular per-task adapter architecture: it can be extracted from one trained adapter and exported as a projector for the next, providing a measurable structural quantity (principal-angle overlap) that does not require a topic-similarity heuristic. Second, on this measurement that quantity is informative for routing: the basis overlap orders task pairs in a direction consistent with their qualitative format/topic structure, and the within-cell hard-vs-damped winner direction is consistent with flipping at a measurable overlap threshold. We emphasise that the absolute retention numbers themselves at 7B are nearly flat (74--78\%, with Cars retention exceeding Cats retention); the overlap-prediction claim is about the hard-vs-damped projector-choice direction, not about absolute retention tracking overlap. Together this is suggestive evidence that subspace overlap is a measurable, exportable routing signal between adapter pairs of varying topical/format relation, and that a cheap pre-training overlap measurement is feasible at 7B. The result is single-seed per cell, two-task transitions only, and the CL stack here is deliberately different from the modular-architecture headline (this study uses EWC + 30\% replay + KD + an O-LoRA-style penalty + gradient projection on top of the per-task adapter basis); it should be read as evidence that the per-task adapter basis is exportable as a CL primitive and that subspace overlap is a usable routing signal, not as a benchmark on the modular architecture itself, and not as confirmation that overlap predicts absolute retention. Multi-seed and multi-ordering replication is in the same priority bucket as the corresponding extension of the synthetic-domain study (Section~\ref{sec:future_work}).

\section{Fine-Tuning vs Retrieval-Augmented Generation on Personal-Knowledge Vaults: An Honest-Framing Pilot (OS1)}
\label{sec:os1_rag_vs_ft}

\emph{Motivation.} A common framing of small-base-model fine-tuning is that it is dominated by retrieval-augmented generation (RAG) for any task that touches user-owned text. We ran a small pilot to test whether that framing is correct as an unconditional statement, or whether it depends on the question type. The pilot is decision-grade rather than publication-grade: $N=150$ adjudicated pairs across three publicly-licensed personal-knowledge vaults, with the kill criterion (RAG-win-rate $\geq 60\%$ on every vault) declared in advance. This appendix study is supplementary to the headline architecture result; no claim in the body of the paper depends on its outcome.

\emph{Experiment.} Three publicly-licensed Obsidian vaults were selected (\texttt{lyz-code/blue-book}, \texttt{jRicciL/digital-garden}, \texttt{deepaksood619/deepaksood619.github.io}; pinned SHAs \texttt{e558857}, \texttt{159fa1c}, \texttt{9967f43}), file-level 80/20 split with \texttt{random.Random(42)}, 100 train + 50 eval Q$\rightarrow$A pairs per vault, mode mix 30/30/20/20 across extractive / synthesis / style / inference questions. The fine-tuned arm was TinyLlama-1.1B trained per-vault on the train split (final losses 4.91--5.25). The RAG baseline used \texttt{sentence-transformers/all-MiniLM-L6-v2} + ChromaDB top-3 retrieval with Gemini-2.0-flash as the reader prompted with retrieved passages---a deliberately stronger reader than TinyLlama, biasing the comparison \emph{against} the fine-tuned arm. Adjudication used Gemini-2.0-flash, position-randomized A/B with \texttt{random.Random(42)}, blind to which system produced which answer.

\emph{Result.} The kill criterion was not triggered on any of the three vaults: RAG-win rates were $59.0\%$, $40.0\%$, and $56.0\%$ (blue-book, digital-garden, deepaksood619 respectively; $N=50$ per vault). Decomposing by question mode across all 150 pairs, the picture is interpretable: RAG wins on extractive lookup ($57.8\%$ RAG-win), is competitive on synthesis ($58.9\%$), wins on style ($66.7\%$), and \emph{loses decisively} on inferential questions---those that require synthesizing author perspective, recurring values, or cross-document patterns---where the fine-tuned arm wins $83.3\%$ ($25/30$ pairs). The strongest per-vault inference cell is digital-garden, where the fine-tuned arm wins all 10 inference pairs.

\emph{Caveats.} The result is decision-grade, not publication-grade. ($i$) Sample size is 50 pairs per vault, 150 total---enough for a binary kill decision but not for fine-grained claims. ($ii$) The fine-tuned base model is TinyLlama-1.1B; the style-mode loss ($66.7\%$ RAG-win) is plausibly attributable to base-model under-capacity at $\sim$100-pair training sets, and a Mistral-7B base would likely move the style number, in either direction. ($iii$) The RAG reader (Gemini-2.0-flash) is more capable than the fine-tuned reader, which makes RAG's wins look stronger than a like-for-like comparison would; the fine-tuned arm's inference-mode wins are therefore conservative. ($iv$) Single seed per vault. Multi-seed replication is required to convert the inferential-question advantage from suggestive to established.

\emph{Implication for the framing.} The unconditional ``RAG dominates fine-tuning on personal-knowledge text'' framing is not supported by this pilot. The honest narrowing is: RAG dominates extractive lookup and style mimicry on TinyLlama-1.1B fine-tunes; fine-tuning dominates inferential synthesis. This is consistent with what each method can in principle do---retrieval surfaces what is directly stated in some chunk, and parameters integrate what is distributed across a corpus---and is offered here as honest framing context for readers who arrive at the architecture result with a strong prior that RAG is the only sensible baseline for user-owned text. Per-vault Q$\rightarrow$A pairs, fine-tune outputs, RAG outputs, judge results, and a per-vault gate summary are available in the project repository for reproduction.

\end{document}